\theoremstyle{plain}
\newtheorem{theorem}{Theorem}
\newtheorem{proposition}[theorem]{Proposition}
\newtheorem*{proposition*}{Proposition}
\newtheorem*{lemma*}{Lemma}
\theoremstyle{definition}
\newtheorem{definition}[theorem]{Definition}
\theoremstyle{remark}
\newtheorem{remark}[theorem]{Remark}
\newcommand{\ie}{\textit{i}.\textit{e}., }
\DeclareMathOperator{\diag}{\operatorname{diag}}
\DeclareMathOperator{\Sp}{\operatorname{Sp}}
\algnewcommand\algorithmicInput{\textbf{Input:}}
\algnewcommand\algorithmicOutput{\textbf{Output:}}
\algnewcommand\Input{\item[\algorithmicInput]}
\algnewcommand\Output{\item[\algorithmicOutput]}
\algnewcommand\algorithmicparameters{\textbf{Parameters:}}
\algnewcommand\Parameters{\item[\algorithmicparameters]}
\theoremstyle{definition}
\newcommand{\xmark}{\ding{55}} 
\newcommand{\cmark}{\ding{51}} 
\newcommand{\method}{HYGENE}
\title{HYGENE: A Diffusion-based Hypergraph Generation Method}
\author {
    Dorian Gailhard,
    Enzo Tartaglione,
    Lirida Naviner,
    Jhony H. Giraldo
}
\begin{document}

\maketitle

\begin{abstract}
Hypergraphs are powerful mathematical structures that can model complex, high-order relationships in various domains, including social networks, bioinformatics, and recommender systems.
However, generating realistic and diverse hypergraphs remains challenging due to their inherent complexity and lack of effective generative models.
In this paper, we introduce a diffusion-based \textbf{Hy}pergraph \textbf{Gene}ration (\method) method that addresses these challenges through a progressive local expansion approach.
\method~works on the bipartite representation of hypergraphs, starting with a single pair of connected nodes and iteratively expanding it to form the target hypergraph.
At each step, nodes and hyperedges are added in a localized manner using a denoising diffusion process, which allows for the construction of the global structure before refining local details.
Our experiments demonstrated the effectiveness of \method, proving its ability to closely mimic a variety of properties in hypergraphs.
To the best of our knowledge, this is the first attempt to employ diffusion models for hypergraph generation. Our code is open source\footnote{\url{https://github.com/DorianGailhard/SODA_Hypergraph-generation}}.
\end{abstract}

\section{Introduction} \label{sec::introduction}

Hypergraphs are higher-order extensions of graphs.
They comprise a set of nodes, also called vertices, and a set of hyperedges.
Unlike regular graphs, where edges connect only two nodes, hyperedges can connect any number of nodes.
These structures have demonstrated their ability to capture more complex relationships than graphs and have been applied in various domains~\cite{gong2023generativehypergraphmodelsspectral,Estrada_2006}.
For instance, hypergraphs have been applied in drug discovery~\cite{pmlr-v97-kajino19a}, modeling contagion spread~\cite{Higham_2021}, and electronics~\cite{Starostin,luo2024dehnneffectiveneuralmodel,GRZESIAKKOPEC2017491}.
Besides, they have also proven useful in recommender systems~\cite{10336546}, molecular biology~\cite{molecular,molecularbiology}, and urban planning~\cite{urbanplanning}.
The versatility of hypergraphs in representing multi-way relationships makes them a powerful tool across these diverse fields; consequently, hypergraph generation (the ability to sample from specific hypergraph distributions) holds significant promise.

\begin{figure}[t]
    \centering
    \includegraphics[width=\columnwidth]{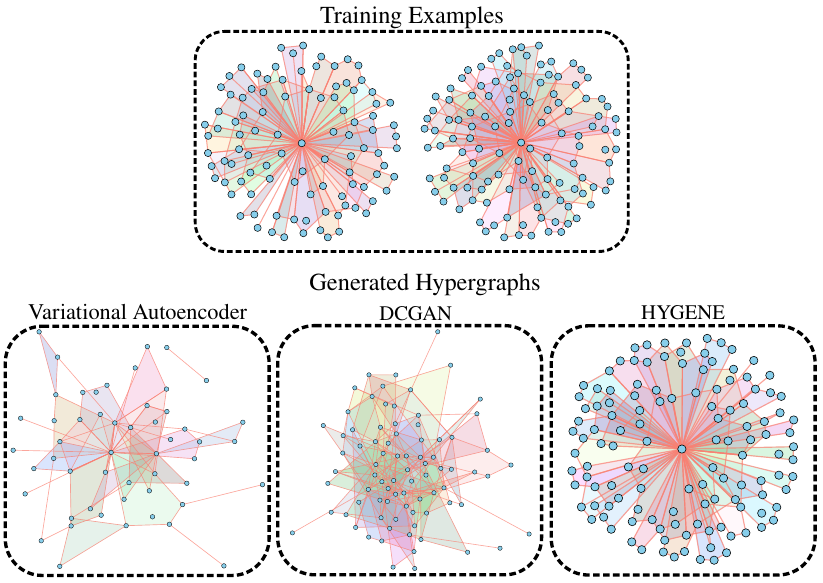}
    \caption{Examples of generated ego hypergraphs by a variational autoencoder, a Deep Convolutional Generative Adversarial Network (DCGAN), and our model (\method).}
    \label{fig:teaser}
\end{figure}

Despite its wide applicability, research in hypergraph generation has primarily focused on algorithmic approaches, aiming to develop methodologies that produce hypergraphs with specific, predefined structural properties~\cite{Do_2020,arafat2020constructionrandomgenerationhypergraphs}.
In contrast, the exploration of hypergraph generation using deep learning models remains largely understudied.
This gap represents a significant opportunity to advance this field since deep learning approaches may capture complex patterns and generate more realistic and diverse hypergraphs.
Such methods could enhance the modeling of intricate relationships beyond the scope of traditional graph structures.


In contrast to hypergraph generation, deep learning-based graph generation has been extensively studied~\cite{zhu2022surveydeepgraphgeneration}.
Learning-based graph generation can be broadly categorized into two approaches: \emph{one-shot} approaches that generate the entire graph simultaneously~\cite{simonovsky2018graphvaegenerationsmallgraphs,you2018graphrnngeneratingrealisticgraphs}, and \emph{iterative} models that generate the graphs incrementally, predicting edges for each new node~\cite{vignac2023digress,chen2023efficientdegreeguidedgraphgeneration}.
While graph generation techniques have shown promise, their adaptation to hypergraphs remains challenging.
The variable size of hyperedges and their higher-order relationships increase the difficulty of the task, making direct application of graph methods non-trivial.
Figure~\ref{fig:teaser} shows that naïvely generating the incidence matrix using classical image generation architectures is not sufficient either, as it lacks the correct understanding of the underlying data structure.


By leveraging the spectral equivalence between a hypergraph and two carefully chosen representations (the clique and star expansions), we generalize the work by~\citet{bergmeister2024efficient} for hypergraph generation.
Their method is based on an iterative local expansion scheme, where graph generation is performed hierarchically, first building the global structure before refining the details.
This process is seen as the inverse of a coarsening operation, which involves the reduction of a graph while preserving its relevant properties.
For hypergraphs, the variable size of hyperedges increases the complexity of the problem, as they are exponentially more numerous than classical edges because every non-empty set of nodes is a possible hyperedge.
In order to mitigate this, we introduce an iterative expansion and refinement process for Hyperedge Generation (\method), rather than predicting all possible hyperedges at once.
We train a denoising diffusion model \cite{karras2022elucidatingdesignspacediffusionbased} using this framework, and validate our method on four synthetic and three real-world datasets, demonstrating its effectiveness in replicating important structural properties.
At a glance, our main contributions are the following:
\begin{itemize}
    \item To the best of our knowledge, we introduce the first diffusion-based method for generating hypergraphs sampled from specific distributions (Sec.~\ref{sec:overview}).
    \item We generalize important concepts in the graph domain to hypergraph generation, like hypergraph coarsening and diffusion (Sec.~\ref{sec:descending}, Sec.~\ref{sec:ascending}, Sec.~\ref{sec:prob}).
    \item We provide rigorous theoretical justifications for our technical choices.
    \item We validate \method~on four synthetic and three real-world datasets, showcasing its ability to capture and reproduce subtle structural properties of hypergraphs (Sec.~\ref{sec::experiments}).
\end{itemize}

\section{Related Work}

\textbf{Graph Generation Using Deep Learning}.
The field of graph generation using deep learning models was pioneered by GraphVAE \cite{simonovsky2018graphvaegenerationsmallgraphs}. 
This approach employs an autoencoder to embed graphs into a latent space, from which new graphs could be generated by sampling and decoding. Subsequently, \citet{you2018graphrnngeneratingrealisticgraphs} significantly enhanced generation quality by utilizing a recurrent neural network to produce the adjacency matrix column-wise.
Recent advancements include the work by \citet{kong2023autoregressive}, which formalized the generation process as an inverse discrete absorption. 
In this method, nodes from a training graph are sequentially absorbed, and a mixture of multinomials is trained to predict the necessary edge additions when reintroducing a node.
Diffusion models, which have shown remarkable success in image generation, were adapted for graph generation in \cite{niu2020permutationinvariantgraphgeneration}.
This approach was further refined by \citet{vignac2023digress} and expanded in \cite{chen2023efficientdegreeguidedgraphgeneration} by incorporating target degree distributions.

A departure from typical methods in graph generation was proposed by \citet{bergmeister2024efficient}.
Instead of sequentially adding nodes and predicting their connections, this approach reverses a \textit{coarsening} process.
During training, graphs are reduced by merging nodes into clusters.
The model then learns to identify these clusters, decompose them, and reconstruct the original connections between their nodes.
Graph generation begins with a single-node graph and progressively expands it using the trained model, mirroring the diffusion approaches seen in modern image generation techniques.
Similar hierarchical concepts have been explored in molecule generation, with \citet{zhu2023molhfhierarchicalnormalizingflow} applying normalizing flows to this domain.
Related ideas can also be found in the work of \citet{guo2023unpooling}.

In contrast to previous work, we focus on the problem of hypergraph generation, which extends the concept of graph generation to higher-order structures.
Furthermore, we employ a hierarchical view of the problem instead of the sequential edge-by-edge generation commonly employed.
We also depart from the classical view of edge prediction, where a model outputs the probability of existence for all possible edges, and, instead, predict both the number of hyperedges and their composition.

\section{Method}

\subsection{Preliminaries}

\noindent \textbf{Notation}.
In this work, calligraphic letters such as $\mathcal{V}$ denote sets, and $\vert \mathcal{V} \vert$ represents the cardinality of the set.
Uppercase boldface letters such as $\mathbf{A}$ represent matrices, while lowercase boldface letters such as $\mathbf{x}$ denote vectors.
The superscripts $(\cdot)^{\mathsf{T}}$ correspond to transposition. $\diag(\mathbf{x})$ denotes a diagonal matrix with entries given by the vector $\mathbf{x}=[x_1,x_2,\dots,x_n]^{\top} \in \mathbb{R}^N$.
Finally, $\Sp(\mathbf{A})$ denotes the set of eigenvalues of a matrix $\mathbf{A}$.

\vspace{0.1cm}

\noindent \textbf{Basic Definitions}.
A graph $G$ is defined as a pair $(\mathcal{V}, \mathcal{E})$, where $\mathcal{V}$ is a set of vertices and $\mathcal{E} \subseteq \mathcal{V} \times \mathcal{V}$ is a set of edges.
Each edge $e \in \mathcal{E}$ is a pair of vertices $(u, v)$, representing a connection between nodes $u$ and $v$. 
A bipartite graph $B$ is a special case of a graph, defined as $(\mathcal{V}_L, \mathcal{V}_R, \mathcal{E})$, where $\mathcal{V}_L$ and $\mathcal{V}_R$ are disjoint sets of vertices, and $\mathcal{E} \subseteq \mathcal{V}_L \times \mathcal{V}_R$.
Every edge in a bipartite graph connects a node in $\mathcal{V}_L$ to a node in $\mathcal{V}_R$.
Note that a bipartite graph can be identified as a graph where $\mathcal{V} = \mathcal{V}_L \cup \mathcal{V}_R$.
We define the Laplacian of a graph $\mathbf{L}_G \in \mathbb{R}^{|\mathcal{V}| \times |\mathcal{V}|}$ as $\mathbf{L}_G = \mathbf{D} - \mathbf{W}$, where $\mathbf{D}$ is the diagonal degree matrix and $\mathbf{W}$ is the weighted adjacency matrix with $\mathbf{W}_{[i,j]} \neq 0$ if $(i,j) \in \mathcal{E}$.
The normalized Laplacian is defined as $\boldsymbol{\mathcal{L}}_G = \mathbf{I} - \mathbf{D}^{-1/2} \mathbf{W} \mathbf{D}^{-1/2}$, where $\mathbf{I}$ is the identity.

\begin{figure*}[t]
    \centering
    \includegraphics[width=0.96\linewidth]{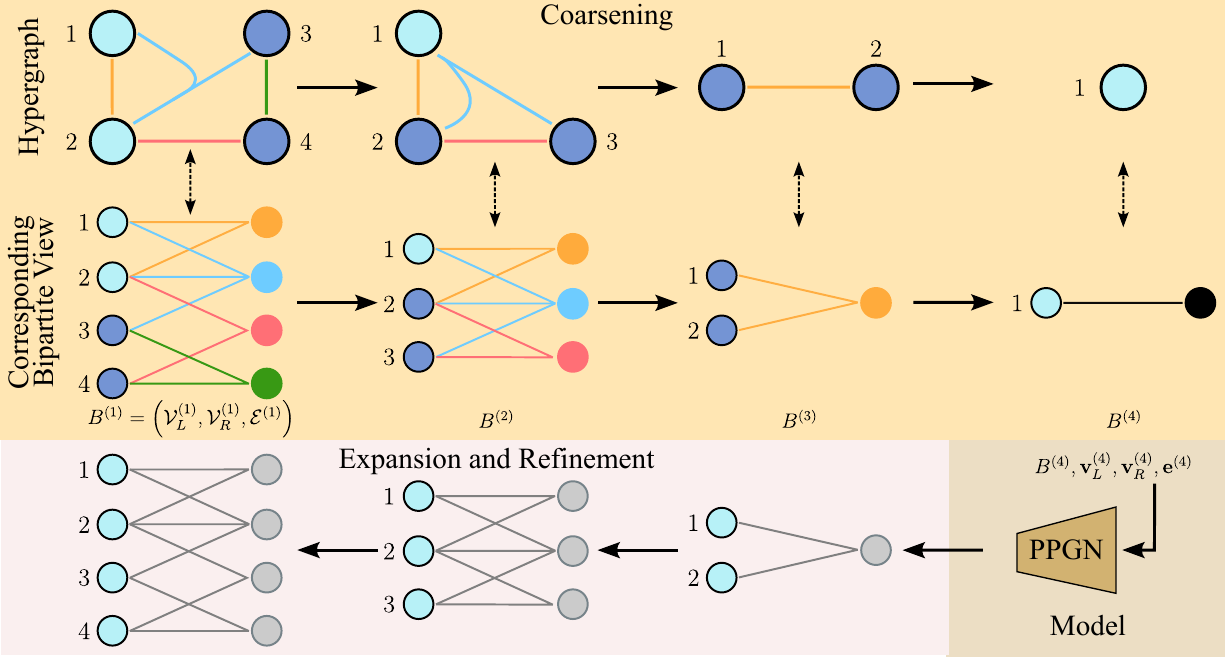}
    \caption{Starting from a hypergraph, our method computes different views of it at increasingly coarser resolutions. An equivalent bipartite representation is maintained in parallel, and a graph neural network model (in our case a PPGN \cite{maron2020provablypowerfulgraphnetworks}) is trained to recover a bipartite representation from its coarser version.}
    \label{fig:pipeline}
\end{figure*}

A hypergraph $H$ is defined as a pair $(\mathcal{V}, \mathcal{E})$, where $\mathcal{V}$ is a set of vertices and $\mathcal{E}$ is a set of hyperedges, with each $e \in \mathcal{E}$ being a subset of $\mathcal{V}$.
Unlike in graphs, hyperedges can connect any number of vertices.
We define two graph representations of hypergraphs: the clique and star expansions.
The clique expansion of a hypergraph $H$ is a graph $C = (\mathcal{V}_c, \mathcal{E}_c)$, where $\mathcal{E}_c = \{(u, v) \mid \exists~e \in \mathcal{E} : u, v \in e\}$.
The star expansion of a hypergraph $H$ is a bipartite graph $B = (\mathcal{V}_L, \mathcal{V}_R, \mathcal{E}_b)$, where $\mathcal{V}_L = \mathcal{V}$, $\mathcal{V}_R = \mathcal{E}$, and $\mathcal{E}_b = \{(v, e) \mid v \in \mathcal{V}_L, e \in \mathcal{V}_R, v \in e \text{ in } H\}$.
Furthermore, we define the Bolla's Laplacian \cite{BOLLA199319} of a hypergraph as $\mathbf{L}_H \in \mathbb{R}^{|\mathcal{V}| \times |\mathcal{V}|}$ as $\mathbf{L}_H = \mathbf{D}_\mathcal{V} - \mathbf{H} \mathbf{D}_\mathcal{E}^{-1} \mathbf{H}^T$, where $\mathbf{D}_\mathcal{V} \in \mathbb{N}^{|\mathcal{V}| \times |\mathcal{V}|}$ is the diagonal degree matrix for the nodes, $\mathbf{D}_\mathcal{E} \in \mathbb{N}^{|\mathcal{E}| \times |\mathcal{E}|}$ is the diagonal matrix of edge orders, and $\mathbf{H} \in \{0, 1\}^{|\mathcal{V}| \times |\mathcal{E}|}$ is the incidence matrix.
The normalized version of $\mathbf{L}_H$, known as Zhou's Laplacian \cite{Zhou2005}, is defined as $\boldsymbol{\mathcal{L}}_H = \mathbf{I} - \mathbf{D}_\mathcal{V}^{-1/2} \mathbf{H} \mathbf{D}_\mathcal{E}^{-1} \mathbf{H}^T \mathbf{D}_\mathcal{V}^{-1/2}$.

The goal of this work is to train a model capable of sampling from the underlying distribution of a given dataset of hypergraphs $(H_1, \dots, H_N)$, \ie learning to generate hypergraphs from data.
All the proofs of propositions and lemmas of this paper are provided in Appendix \ref{sec::proofs}.

\subsection{Overview}
\label{sec:overview}
The workflow of our approach is illustrated in Figure \ref{fig:pipeline}.
Our method utilizes two distinct representations of hypergraphs: the weighted clique expansion and the star expansion.
The weighted clique expansion (not depicted in the figure) facilitates the downward traversal of the resolution scales.
This representation enables the application of the algorithm proposed by \citet{loukas2018graph} to generate reduced versions of the hypergraph while maintaining its spectral properties.
Concurrently, we maintain the hypergraph's star expansion, representing it as a bipartite graph.
In this representation, one partition corresponds to the hypergraph's nodes (left side), while the other represents its hyperedges (right side).
Each node is connected to the hyperedges containing it.
The bipartite representation proves particularly convenient for training a deep learning model capable of ascending the resolution scales.
Starting from a coarser representation, the model learns to identify merged nodes and hyperedges, subsequently reconstructing the original bipartite representation.
In our implementation, we employ the denoising diffusion model framework \cite{karras2022elucidatingdesignspacediffusionbased} to model the learning problem: the truth values for the nodes and edges requiring expansion and deletion are noised, and a model is trained to recover the original values.
We chose Provably Powerful Graph Network (PPGN) \cite{maron2020provablypowerfulgraphnetworks} as the architecture of this model.
Hypergraph generation can then be achieved through an iterative process of increasing resolution, starting from the coarsest bipartite representation, which consists of a pair of connected nodes.

\subsection{Descending through the Resolution Scales: Coarsening Sequences}

\label{sec:descending}
\begin{figure*}[t]
    \centering
    \begin{subfigure}[b]{0.29\textwidth}
        \centering
        \includegraphics[width=\textwidth]{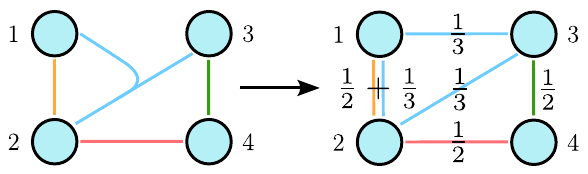}
        \caption{Weighted clique expansion}
        \label{fig:weighted_clique_expansion}
    \end{subfigure}
    \hfill
    \begin{subfigure}[b]{0.29\textwidth}
        \centering
        \includegraphics[width=\textwidth]{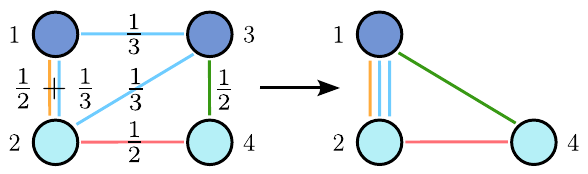}
        \caption{Coarsening}
        \label{fig:clique_coarsening}
    \end{subfigure}
    \hfill
    \begin{subfigure}[b]{0.4\textwidth}
        \centering
        \includegraphics[width=\textwidth]{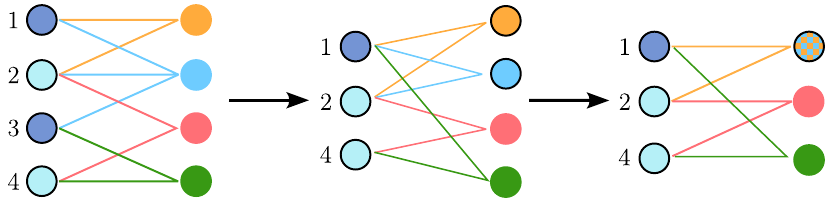}
        \caption{Bipartite representation update}
        \label{fig:bipartite_coarsening}
    \end{subfigure}
    \caption{Coarsening: (a) Compute the weighted clique expansion by collapsing each hyperedge into an appropriately weighted clique. (b) Coarsen the clique expansion while preserving the spectral properties of the hypergraph (dark blue nodes). (c) Update the bipartite view: corresponding left side nodes (in dark blue) are merged, then right side nodes representing the same hyperedge (circled in black) are merged.}
    \label{fig:three_coarsening_steps}
\end{figure*}


\begin{definition}[Graph coarsening] \label{def::coarsening}
Let $G = (\mathcal{V}, \mathcal{E})$ be an arbitrary graph and $\mathcal{P} = \{\mathcal{V}^{(1)}, \ldots, \mathcal{V}^{(\bar{n})}\}$ be a partitioning\footnote{{This implies that $\mathcal{V}^{(p)} \subseteq \mathcal{V}$, $\bigcup_{i=1}^{\bar{n}} \mathcal{V}^{(i)} = \mathcal{V}$, and $~{\mathcal{V}^{(i)} \cap \mathcal{V}^{(j)} = \emptyset~\forall~1\leq i,j \leq \bar{n}}$.}}  of the node set $\mathcal{V}$ such that each set $\mathcal{V}^{(p)} \in \mathcal{P}$ induces a connected subgraph in $G$.
We construct a coarsening $\bar{G}(G, \mathcal{P}) = (\bar{\mathcal{V}}, \bar{\mathcal{E}})$ of $G$ by representing each part $\mathcal{V}^{(p)} \in \mathcal{P}$ as a single node $v^{(p)} \in \bar{\mathcal{V}}$.
We add an edge $e_{\{p,q\}} \in \bar{\mathcal{E}}$, between distinct nodes $v^{(p)} \neq v^{(q)} \in \bar{\mathcal{V}}$ in the coarsened graph if and only if there exists an edge $e_{\{i,j\}} \in \mathcal{E}$ between the corresponding disjoint clusters in the original graph, \ie $v^{(i)} \in \mathcal{V}^{(p)}$ and $v^{(j)} \in \mathcal{V}^{(q)}$.
\end{definition}

\begin{remark}
This definition implies partitioning the nodes into different connected sets and merging each part into a cluster.
Two clusters are connected if and only if there exists an edge between some node in the first cluster and some node in the second cluster.
\end{remark}

We extend Definition \ref{def::coarsening} to the bipartite representations of hypergraphs:

\begin{definition}[Bipartite representation coarsening] \label{def::bipartite_coarsening}
Let $H$ be an arbitrary hypergraph, $C = (\mathcal{V}_c, \mathcal{E}_c)$ its weighted clique expansion, $B = (\mathcal{V}_L, \mathcal{V}_R, \mathcal{E})$ its bipartite representation, and $\mathcal{P}_L = \{\mathcal{V}^{(1)}, \ldots, \mathcal{V}^{(n)}\}$ a partitioning of the node set $\mathcal{V}_c$ of $C$ (and equivalently of the node set $\mathcal{V}_L$ of $B$), such that each part $\mathcal{V}^{(p)} \in \mathcal{P}_L$ induces a connected subgraph in $C$.
We construct an intermediate coarsening $\bar{B}(B, \mathcal{P}_L) = (\bar{\mathcal{V}_L}, \mathcal{V}_R, \bar{\mathcal{E}})$ of $B$ by representing each part $\mathcal{V}^{(p)} \in \mathcal{P}_L$ as a single node $v^{(p)} \in \bar{\mathcal{V}_L}$. We add an edge $e_{\{p,q\}} \in \bar{\mathcal{E}}$, between distinct nodes $v^{(p)} \in \bar{\mathcal{V}_L}$ and $v^{(q)} \in \mathcal{V}_R$ in the coarsened graph if and only if there exists an edge $e_{\{i, q\}} \in \mathcal{E}$ between a node $v^{(i)} \in \mathcal{V}^{(p)}$ and right side node $v^{(q)}$ in the original graph.

Now let $v_1 \sim v_2 \iff \mathcal{N}(v_1) = \mathcal{N}(v_2)$ define an equivalence relation for the right side nodes in $\mathcal{V}_R$, where $\mathcal{N}(v)$ denotes the set of neighbors of $v$.
This equivalence relation induces a partitioning $\mathcal{P}_R = \{\mathcal{V}_R^{(1)}, \dots, \mathcal{V}_R^{(m)}\}$ of $\mathcal{V}_R$. 
Finally, we construct the fully coarsened bipartite representation by representing each part $\mathcal{V}^{(p)} \in \mathcal{P}_R$ as a single node $v^{(p)} \in \bar{\mathcal{V}_R}$, in a similar way to $\bar{\mathcal{V}_L}$.
\end{definition}

\begin{remark}
Here, nodes are merged into clusters, and then hyperedges appearing multiple times are merged. This process is illustrated in Figure \ref{fig:three_coarsening_steps}.
\end{remark}

We now describe the construction of coarsening sequences for a hypergraph $H = (\mathcal{V}, \mathcal{E})$. This process maintains the weighted clique expansion and the bipartite representation of $H$ in parallel. We leverage the following result:

\begin{proposition}[Adapted from Section 6.4 in \cite{HigherOrderLearning} and proved in Appendix \ref{sec::spectral_equivalence_bipartite}]
    For an unweighted hypergraph, Bolla's \textit{unnormalized} Laplacian $\mathbf{L}_H = \mathbf{D}_\mathcal{V} - \mathbf{H} \mathbf{D}_\mathcal{E}^{-1} \mathbf{H}^T$ is equal to the \textit{unnormalized} Laplacian of the associated clique expansion $C$, where each edge $e_{uv}$ is weighted by $\sum_{e \ni u, v ; \text{ } e \in \mathcal{E}} \frac{1}{|e|}$.
\end{proposition}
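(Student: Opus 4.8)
The plan is to prove the asserted matrix identity $\mathbf{L}_H = \mathbf{L}_C$ entrywise, where $\mathbf{L}_C$ denotes the unnormalized Laplacian of the weighted clique expansion $C$ with edge weights $w_{uv} := \sum_{e \in \mathcal{E},\, e \ni u, v} \tfrac{1}{|e|}$. Concretely, $\mathbf{L}_C = \mathbf{D}_C - \mathbf{W}_C$ where $(\mathbf{W}_C)_{[u,v]} = w_{uv}$ for $u \neq v$, $(\mathbf{W}_C)_{[u,u]} = 0$, and $(\mathbf{D}_C)_{[u,u]} = \sum_{v \neq u} w_{uv}$. I would then compare this, entry by entry, with $\mathbf{L}_H = \mathbf{D}_\mathcal{V} - \mathbf{H}\mathbf{D}_\mathcal{E}^{-1}\mathbf{H}^{\mathsf{T}}$, handling the off-diagonal and diagonal entries separately.

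First I would expand the product $\mathbf{H}\mathbf{D}_\mathcal{E}^{-1}\mathbf{H}^{\mathsf{T}}$. Since $\mathbf{H}_{[u,e]} = 1$ exactly when $u \in e$ and $(\mathbf{D}_\mathcal{E}^{-1})_{[e,e]} = 1/|e|$ (the hypergraph is unweighted, so $\mathbf{D}_\mathcal{E}$ records hyperedge orders, and $|e|\geq 1$ makes it invertible), one gets $(\mathbf{H}\mathbf{D}_\mathcal{E}^{-1}\mathbf{H}^{\mathsf{T}})_{[u,v]} = \sum_{e \in \mathcal{E}} \mathbf{H}_{[u,e]}\mathbf{H}_{[v,e]}/|e| = \sum_{e \ni u,v} 1/|e|$. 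For $u \neq v$ this is exactly $w_{uv}$, so the off-diagonal entries of $\mathbf{L}_H$ are $-w_{uv}$, coinciding with those of $\mathbf{L}_C$. The only leftover piece is the diagonal term $(\mathbf{H}\mathbf{D}_\mathcal{E}^{-1}\mathbf{H}^{\mathsf{T}})_{[u,u]} = \sum_{e \ni u} 1/|e|$.

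The one computation requiring a bit of care is matching the diagonals. On the hypergraph side, $(\mathbf{L}_H)_{[u,u]} = (\mathbf{D}_\mathcal{V})_{[u,u]} - \sum_{e \ni u} 1/|e| = \deg_\mathcal{E}(u) - \sum_{e \ni u} 1/|e|$, where $\deg_\mathcal{E}(u)$ is the number of hyperedges containing $u$. On the clique-expansion side, $(\mathbf{L}_C)_{[u,u]} = \sum_{v \neq u} w_{uv} = \sum_{v \neq u}\sum_{e \ni u,v} 1/|e|$; interchanging the two summations, each hyperedge $e$ containing $u$ contributes the term $1/|e|$ once for each of its other $|e|-1$ members, so $(\mathbf{L}_C)_{[u,u]} = \sum_{e \ni u} (|e|-1)/|e| = \sum_{e \ni u} 1 - \sum_{e \ni u} 1/|e| = \deg_\mathcal{E}(u) - \sum_{e \ni u} 1/|e|$, which is precisely $(\mathbf{L}_H)_{[u,u]}$. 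As every entry agrees, $\mathbf{L}_H = \mathbf{L}_C$.

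I expect the main obstacle to be nothing more than bookkeeping: correctly isolating the ``self-contribution'' $\sum_{e \ni u} 1/|e|$ that the factorization $\mathbf{H}\mathbf{D}_\mathcal{E}^{-1}\mathbf{H}^{\mathsf{T}}$ deposits on the diagonal, and observing that the clique-expansion weighted degree $\sum_{v \neq u} w_{uv}$ collapses — via the summation interchange and the count $|e|-1$ — to the hyperedge degree $\deg_\mathcal{E}(u)$ minus exactly that same self-contribution, so the two cancel consistently. There is no analytic difficulty; the statement is a direct consequence of writing $\mathbf{H}\mathbf{D}_\mathcal{E}^{-1}\mathbf{H}^{\mathsf{T}}$ out explicitly and swapping an order of summation.
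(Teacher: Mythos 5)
Your proof is correct and follows essentially the same route as the paper's: expand $\mathbf{H}\mathbf{D}_\mathcal{E}^{-1}\mathbf{H}^{\mathsf{T}}$ entrywise, match the off-diagonal entries to the clique weights $\sum_{e \ni u,v} 1/|e|$, and verify the diagonal by showing the weighted clique degree collapses to $\sum_{e \ni u} (|e|-1)/|e| = \deg_\mathcal{E}(u) - \sum_{e \ni u} 1/|e|$. If anything, your bookkeeping of which quantity sits on which side of the diagonal identity is slightly tidier than the appendix's displayed chain, but the argument is the same.
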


This proposition establishes that the spectral properties of the hypergraph are equivalent to those of an appropriately weighted clique expansion.
\citet{loukas2018graph} introduced an algorithm to construct a coarsened version of a graph while preserving a subset of its eigenvalues and eigenvectors.
As the weighted clique expansion is a graph, this algorithm can be applied to select groups of nodes for merging.
This allows the construction of a coarser view of the hypergraph while preserving relevant spectral properties, which are known to capture important characteristics of the underlying topology.
Figure \ref{fig:weighted_clique_expansion} illustrates an example of such a weighted clique expansion.

The coarsening process consists of three steps (illustrated in Figures \ref{fig:clique_coarsening} and \ref{fig:bipartite_coarsening}):
\begin{enumerate}
    \item The algorithm by \citet{loukas2018graph} operates on the weighted clique expansion to identify a suitable partitioning of nodes, also referred to as ``contraction sets".
    \item These contraction sets are merged in the weighted clique expansion, and the corresponding left-side nodes in the bipartite representation of the hypergraph are also merged.
    \item Finally, the right-side nodes in the bipartite representation of the hypergraph representing the same hyperedge are merged.
\end{enumerate}

This procedure is applied iteratively until we obtain a single-node graph for the weighted clique expansion and a corresponding bipartite graph with a single node on each side for the bipartite representation.
It is important to note that in this setting, controlling the merging of hyperedges (right-side nodes for the bipartite representation) is challenging.
Empirical experiments have shown that even with few left-node mergings, the right side can easily merge tens of hyperedges into one cluster at once in dense hypergraphs.
To avoid this issue, we select the contraction family to be the set of all pairs of adjacent nodes in the clique representation.
Therefore, we obtain the following result:
\begin{proposition}
\label{prop:upper_bound_edges}
    For a single merging of two adjacent nodes in the clique representation, at most three hyperedges can be involved in each hyperedge merging in the bipartite representation.
\end{proposition}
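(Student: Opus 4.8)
The plan is to track how the neighborhood of each right-side node (hyperedge) in the bipartite representation changes when two adjacent left-side nodes $u$ and $v$ are merged into a single node $w$, and then to observe that only a bounded number of hyperedges can end up sharing the same neighborhood (which is exactly the condition under which they get merged).

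First I would fix the two left-side nodes $u, v \in \mathcal{V}_L$ being merged and, for a hyperedge $e \in \mathcal{V}_R$, decompose its neighborhood as $\mathcal{N}(e) = (\mathcal{N}(e) \cap \{u,v\}) \cup S_e$ with $S_e := \mathcal{N}(e) \setminus \{u,v\}$. By Definition~\ref{def::bipartite_coarsening}, after the contraction $u,v\mapsto w$ the updated neighborhood is $\mathcal{N}'(e) = S_e$ when $e \cap \{u,v\} = \emptyset$, and $\mathcal{N}'(e) = S_e \cup \{w\}$ otherwise (that is, whenever $e$ contains $u$, or $v$, or both). Hence $\mathcal{N}'(e)$ depends on $e$ only through the pair $\big(S_e,\ \mathbf{1}[e\cap\{u,v\}\neq\emptyset]\big)$, and two hyperedges get merged in step~3 precisely when these pairs coincide.

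Next I would invoke the invariant that, at the start of each coarsening iteration, all right-side nodes have pairwise distinct neighborhoods: this holds because a hypergraph carries a \emph{set} of hyperedges, and step~3 of the previous iteration collapses any duplicates that arose. Consequently, for a fixed set $S$ there is at most one hyperedge of each of the four ``types'' $e\cap\{u,v\}\in\{\emptyset,\{u\},\{v\},\{u,v\}\}$ having $S_e = S$. The type-$\emptyset$ hyperedge maps to $\mathcal{N}'=S$ and forms a singleton class, so it triggers no merging; the remaining three types all map to $\mathcal{N}'=S\cup\{w\}$. Therefore any hyperedge-merging class created by this contraction contains at most the three hyperedges of types $\{u\}$, $\{v\}$, and $\{u,v\}$, which is the claimed bound of three.

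The main obstacle is not the counting but making the distinctness invariant airtight: one must verify that the coarsening pipeline genuinely preserves ``all right-side nodes have distinct neighborhoods'' between iterations, since this is exactly what limits each of the four type-classes to a single hyperedge. A secondary point worth stating explicitly is that the adjacency of $u$ and $v$ in the clique expansion is what guarantees the contraction is an admissible coarsening (the merged set induces a connected subgraph) and that a hyperedge of type $\{u,v\}$ may exist at all; adjacency plays no role in the upper bound itself, which would in fact drop to two in the absence of a hyperedge containing both $u$ and $v$.
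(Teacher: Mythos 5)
Your proof is correct, and it reaches the bound by a somewhat different and arguably cleaner decomposition than the paper's. The paper argues by contradiction through an auxiliary lemma organized around hyperedge \emph{sizes} (a hyperedge can merge with at most one other hyperedge of the same size), followed by a case analysis on a hypothetical four-element merging class. You instead classify hyperedges directly by the pair $\bigl(S_e,\ e\cap\{u,v\}\bigr)$, note that the post-contraction neighborhood is determined by $S_e$ together with whether the intersection is empty, and count: for a fixed $S$, distinctness of right-side neighborhoods leaves at most one hyperedge of each of the three nonempty intersection types $\{u\}$, $\{v\}$, $\{u,v\}$, hence at most three per class, while the type-$\emptyset$ hyperedges never collide with anything. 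Both arguments ultimately rest on the same two facts---merging hyperedges must agree outside $\{u,v\}$, and right-side nodes have pairwise distinct neighborhoods---but you make the second fact explicit as an invariant maintained by step 3 of Definition~\ref{def::bipartite_coarsening} (and by the initial hyperedge set being duplicate-free), whereas the paper uses it only implicitly in its opening observation that two hyperedges containing both contracted nodes and becoming identical must already have been equal. Your version buys the constant $3$ directly as the number of nonempty subsets of $\{u,v\}$ rather than by exhausting cases, and your closing remarks are accurate: adjacency of $u$ and $v$ is what makes the contraction admissible and a type-$\{u,v\}$ hyperedge possible, and is not what drives the upper bound.
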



\begin{remark} \label{rem::our_setting}
    This proposition holds only for a single merging of a node pair at each coarsening step. In the case of multiple simultaneous mergings, the proposition does not necessarily hold. However, it can be enforced by ensuring that the different contraction sets have disjoint neighborhoods. In our experiments, we instead consider each relevant node merging individually and proceed with it only if every right-side cluster does not exceed three hyperedges. The complete coarsening sampling procedure incorporating this approach is detailed in Algorithm \ref{alg::hypergraph_coarsening} of Appendix \ref{sec::coarsening}.
\end{remark}

\subsection{Ascending through the Resolution Scales: Expansion and Refinement}
\label{sec:ascending}

\begin{figure}[t]
    \centering
    \includegraphics[width=\columnwidth]{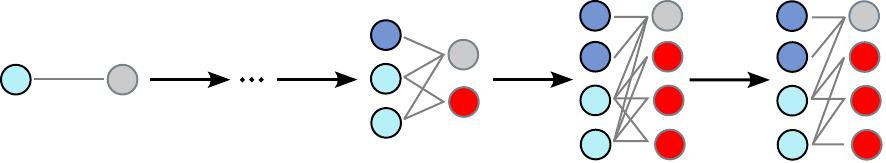}
    \caption{Our model starts from a single pair of linked nodes (in the bipartite representation) and iteratively expands the left-side nodes (in dark blue) and right-side nodes (in red), where each duplicate keeps the connections of its parent node. Then, our method refines the resulting bipartite graph filtering edges to recover an appropriate local structure.}
    \label{fig:hypergraph_generation}
\end{figure}

We now describe the expansion and refinement of the bipartite representation of a hypergraph, which is the inverse of the coarsening process (the proof can be found in Appendix \ref{sec::proof_inversion}).
This process operates exclusively on the bipartite representation.
At each step, starting from the bipartite representation for a specific resolution level $B = (\mathcal{V}_L, \mathcal{V}_R, \mathcal{E})$, we first select which nodes to duplicate.
This selection is encoded via two vectors: $\mathbf{v}_L \in \mathbb{N}^{|\mathcal{V}_L|}$ for left-side nodes, and $\mathbf{v}_R \in \mathbb{N}^{|\mathcal{V}_R|}$ for right-side nodes.
These vectors specify the number of times each node needs to be duplicated.
Therefore, we expand the bipartite graph by duplicating each node by the specified number.
Each duplicate retains the same connectivity as its parent cluster node.

The following definition formalizes this process (illustrated in the two center figures of Figure \ref{fig:hypergraph_generation}):

\begin{definition}[Bipartite graph expansion] \label{def::expansion}
    Given a bipartite graph $B = (\mathcal{V}_L, \mathcal{V}_R, \mathcal{E})$ and two cluster size vectors $\mathbf{v}_L \in \mathbb{N}^{|\mathcal{V}_L|}$, $\mathbf{v}_R \in \mathbb{N}^{|\mathcal{V}_R|}$, denoting the expansion size of each node, let $\tilde{B}(B, \mathbf{v}_L, \mathbf{v}_R) = (\tilde{\mathcal{V}}_L, \tilde{\mathcal{V}}_R, \tilde{\mathcal{E}})$ denote the expansion of $B$, whose node sets are given by:
    \begin{itemize}
        \item $\tilde{\mathcal{V}}_L = \mathcal{V}_L^{(1)} \cup \cdots \cup \mathcal{V}_L^{(|\mathcal{V}_L|)}$, where $\mathcal{V}_L^{(p)} = \{v_L^{(p, i)} \mid 1 \leq i \leq \mathbf{v}_L[p]\}$ for $1 \leq p \leq |\mathcal{V}_L|$.
        \item $\tilde{\mathcal{V}}_R = \mathcal{V}_R^{(1)} \cup \cdots \cup \mathcal{V}_R^{(|\mathcal{V}_R|)}$, where $\mathcal{V}_R^{(p)} = \{v_R^{(p, i)} \mid 1 \leq i \leq \mathbf{v}_R[p]\}$ for $1 \leq p \leq |\mathcal{V}_R|$.
    \end{itemize}
    The edge set $\tilde{\mathcal{E}}$ includes all the cluster interconnecting edges:
    $\{e_{\{p, i; q, j\}} \mid e_{\{p, q\}} \in \mathcal{E}, v_L^{(p, i)} \in \mathcal{V}_L^{(p)}, v_R^{(q, j)} \in \mathcal{V}_R^{(q)}\}$.
\end{definition}

After the expansion, we selectively keep or remove edges of the resulting bipartite graph using an edge selection vector $\mathbf{e} \in \{0, 1\}^{|\mathcal{E}|}$ (this corresponds to the rightmost step in Figure \ref{fig:hypergraph_generation}):

\begin{definition}[Bipartite representation refinement] \label{def::refinement}
    Given a bipartite graph $\tilde{B} = (\tilde{\mathcal{V}}_L, \tilde{\mathcal{V}}_R, \tilde{\mathcal{E}})$ and an edge selection vector $\mathbf{e} \in \{0, 1\}^{|\mathcal{E}|}$, let $B(\tilde{B}, \mathbf{e}) = (\mathcal{V}_L, \mathcal{V}_R, \mathcal{E})$ denote the refinement of $\tilde{B}$, where: $\mathcal{V}_L = \tilde{\mathcal{V}}_L$, $\mathcal{V}_R = \tilde{\mathcal{V}}_R$, $\mathcal{E} \subseteq \tilde{\mathcal{E}}$ such that the $i$-th edge $e_{(i)} \in \mathcal{E}$ if and only if $\mathbf{e}[i] = 1$.
\end{definition}

The process of generating a hypergraph consists of the following steps:
\begin{enumerate}
    \item Start from a bipartite graph containing only two nodes linked by an edge: $B^{(L)} = ( \{1\}, \{2\}, \{(1, 2)\} )$ (leftmost figure of Figure \ref{fig:hypergraph_generation}).
    \item Iteratively expand and refine the current bipartite representation to add details until the desired size is attained:
        $$B^{(l)} \xrightarrow{\text{expand}} \tilde{B}^{(l-1)} \xrightarrow{\text{refine}} B^{(l-1)}$$
    \item Once the final bipartite representation is generated, recover the associated hypergraph by collapsing each node on the right side into a hyperedge.
\end{enumerate}

\begin{remark}
    Our generation part differs from that by \citet{bergmeister2024efficient} in the expansion step. 
    While they initially retain all possible edges between connected clusters before selection, we treat hyperedges analogously to nodes due to the exponential growth of potential hyperedges in hypergraphs. 
    This constraint is necessary to maintain computational feasibility.
\end{remark}

\subsection{Probabilistic Modeling}
\label{sec:prob}
We now formalize our learning problem, generalizing the approach by \citet{bergmeister2024efficient}. Given a dataset $\{H^{(1)}, \ldots, H^{(N)}\}$ of i.i.d. hypergraph samples, our goal is to fit a distribution $p(H)$ that closely approximates the unknown true generative process. We model the marginal likelihood of a hypergraph $H$ as the sum of likelihoods over expansion sequences of its bipartite representation $B$:
\begin{equation}
p(H) = p(B) = \sum_{\varpi \in \Pi(B)} p(\varpi),
\end{equation}
where $\Pi(B)$ denotes the set of all possible expansion sequences $(B^{(L)} = (\{1\}, \{2\}, \{(1, 2)\}), B^{(1)}, \ldots, B^{(0)} = B)$ from a minimal bipartite graph to the target hypergraph's bipartite representation $B$.
Each $B^{(l-1)}$ is a refined expansion of its predecessor, as per Definitions \ref{def::expansion} and \ref{def::refinement}.

\begin{table*}[t]
    \centering
    \setlength{\tabcolsep}{3pt}  
    \resizebox{\textwidth}{!}{%
    \begin{tabular}{@{}r*{5}{c}*{5}{c}*{5}{c}@{}}
    \toprule
    \multirow{4}{*}{\textbf{Model}} & \multicolumn{5}{c}{\textbf{SBM Hypergraphs}} & \multicolumn{5}{c}{\textbf{Ego Hypergraphs}} & \multicolumn{5}{c}{\textbf{Tree Hypergraphs}} \\
    & \multicolumn{5}{c}{($n_{avg} = $ 31.73, $std = $ 0.55)} &\multicolumn{5}{c}{($n_{avg} = $ 109.71, $std = $ 10.23)} &\multicolumn{5}{c}{($n_{avg} = $ 32, $std = $ 0)} \\
    \cmidrule(l){2-6}
    \cmidrule(l){7-11}
    \cmidrule(l){12-16}
    &
    \makecell{\textbf{Valid} \\ \textbf{SBM} $\uparrow$} &
    \makecell{\textbf{Node} \\ \textbf{Num} $\downarrow$} &
    \makecell{\textbf{Node} \\ \textbf{Deg} $\downarrow$} &
    \makecell{\textbf{Edge} \\ \textbf{Size} $\downarrow$} &
    \textbf{Spectral $\downarrow$} &
    \makecell{\textbf{Valid} \\ \textbf{Ego} $\uparrow$} &
    \makecell{\textbf{Node} \\ \textbf{Num} $\downarrow$} &
    \makecell{\textbf{Node} \\ \textbf{Deg} $\downarrow$} &
    \makecell{\textbf{Edge} \\ \textbf{Size} $\downarrow$} &
    \textbf{Spectral $\downarrow$} &
    \makecell{\textbf{Valid} \\ \textbf{Tree} $\uparrow$} &
    \makecell{\textbf{Node} \\ \textbf{Num} $\downarrow$} &
    \makecell{\textbf{Node} \\ \textbf{Deg} $\downarrow$} &
    \makecell{\textbf{Edge} \\ \textbf{Size} $\downarrow$} &
    \textbf{Spectral $\downarrow$} \\
    \midrule
    HyperPA &2.5\% &\textbf{0.075} &4.062 &0.407 &0.273 & 0\% &35.83 &2.590 &0.423 &0.237 &0\% &2.350 &0.315 &0.284 &0.159 \\
    VAE &0\% &0.375 &1.280 &1.059 &0.024 &0\% &47.58 &0.803 &1.458 &0.133 &0\% &9.700 &0.072 &0.480 &0.124 \\
    GAN &0\% &1.200 &2.106 &1.203 &0.059 & 0\% &60.35 &0.917 &1.665 &0.230 &0\% &6.000 &0.151 &0.469 &0.089 \\
    Diffusion &0\% &0.150 &1.717 &1.390 &0.031 & 0\% &\textbf{4.475} &3.984 &2.985 &0.190 &0\% &2.225 &1.718 &1.922 &0.127 \\[0.2em]
    \cdashline{1-16}\\[-0.8em]
    \method &\textbf{65}\%& 0.525& \textbf{0.321}& \textbf{0.002}& \textbf{0.010} & \textbf{90}\%& 12.55& \textbf{0.063}& \textbf{0.220}& \textbf{0.004} &\textbf{77.5}\%& \textbf{0.000}& \textbf{0.059}& \textbf{0.108} & \textbf{0.012} \\
    \bottomrule
    \end{tabular}%
    }
    \caption{Comparison between \method~and other baselines for the SBM, Ego, and Tree hypergraphs.}
    \label{tab:hypergraph_SBM_Ego_Tree_results}
\end{table*}

Assuming a Markovian structure, we factorize the likelihood as:
\begin{align}
p(\varpi) &= \underbrace{p(B^{(L)})}_{1} \cdot \prod_{l=L}^1 p(B^{(l-1)} | B^{(l)}) \\
&= \prod_{l=L}^1 p(\mathbf{e}^{(l-1)} | \tilde{B}^{(l-1)})p(\mathbf{v}_L^{(l)}, \mathbf{v}_R^{(l)} | B^{(l)}).
\end{align}
To avoid modeling two separate distributions $p(\mathbf{e}^{(l)} | \tilde{B}^{(l)})$ and $p(\mathbf{v}_L^{(l)}, \mathbf{v}_R^{(l)} | B^{(l)})$, we rearrange terms as:
\begin{equation}
\begin{split}
    p(\varpi) = \enspace &p(\mathbf{e}^{(0)} | \tilde{B}^{(0)}) \cdot \underbrace{p(\mathbf{v}_L^{(L)}, \mathbf{v}_R^{(L)} | B^{(L)})}_{p(\mathbf{v}_L^{(L)}, \mathbf{v}_R^{(L)})} \cdot \\
    &\cdot \left[ \prod_{l=L-1}^1 p(\mathbf{v}_L^{(l)}, \mathbf{v}_R^{(l)} | B^{(l)})p(\mathbf{e}^{(l)} | \tilde{B}^{(l)}) \right].
\label{eq:1}
\end{split}
\end{equation}
We model $\mathbf{v}_L^{(l)}$ and $\mathbf{v}_R^{(l)}$ to be conditionally independent of $\tilde{B}^{(l)}$ given $B^{(l)}$, \ie $p(\mathbf{v}_L^{(l)}, \mathbf{v}_R^{(l)} | B^{(l)}, \tilde{B}^{(l)}) = p(\mathbf{v}_L^{(l)}, \mathbf{v}_R^{(l)} | B^{(l)})$. This allows us to finally write:
\begin{equation}
    p(\mathbf{v}_L^{(l)}, \mathbf{v}_R^{(l)} | B^{(l)})p(\mathbf{e}^{(l)} | \tilde{B}^{(l)}) = p(\mathbf{v}_L^{(l)}, \mathbf{v}_R^{(l)}, \mathbf{e}^{(l)} | \tilde{B}^{(l)}).
\end{equation}

\subsection{Implementation}

We employ EDM denoising diffusion framework \cite{karras2022elucidatingdesignspacediffusionbased} to model the probability $p(\mathbf{v}_L, \mathbf{v}_R, \mathbf{e} | B)$:
$\mathbf{v}_L$, $\mathbf{v}_R$, and $\mathbf{e}$ are treated as node and edge features of the bipartite representation; these features are noised and a model is trained to recover the original features.
Our model consists of an embedding layer for each vector, followed by a PPGN \cite{maron2020provablypowerfulgraphnetworks} feeding into one final output layer.
Appendix \ref{sec::model_architecture} details the architecture.

Additional details are provided as follows (see Appendix \ref{sec::miscellaneous} for more details).
\begin{enumerate}
    \item Similarly to \cite{bergmeister2024efficient}, we use a deterministic expansion size procedure where only a predefined number of nodes are expanded at each iteration, those being the most probable according to the model, while the others are left unchanged.
    \item We also reuse the perturbed expansion, where the bipartite graph is noised through the introduction of random edges connecting a node to others located within a predefined radius around it.
    \item We extend spectral conditioning to hypergraphs, wherein spectral properties of the target hypergraph are used as conditioning during the prediction of $B^{(l)}$.
\end{enumerate}
Indeed, the spectrum of $B^{(l+1)}$ approximates the target spectrum due to the spectral preservation during coarsening: using Lemma 1 by \citet{HigherOrderLearning}, we can easily prove (see Appendix \ref{sec::spectral_equivalence_bipartite}):
\begin{equation} \label{eq::link_spectra}
    Sp(\mathbf{\mathcal{L}}_B) = \left\{ 1 \pm \sqrt{1 - \lambda} \: \vert \: \lambda \in Sp(\mathbf{\mathcal{L}}_H) \right\} \subset [0, 2],
\end{equation}
where $Sp(\mathbf{\mathcal{L}}_B)$ and $Sp(\mathbf{\mathcal{L}}_H)$ are the spectra of the \textit{normalized} Laplacians of the bipartite representation and the hypergraph, respectively. 
There is also equivalence for the eigenspaces (see the proof in Appendix \ref{sec::spectral_equivalence_bipartite}).
Consequently, preserving the $k$ smallest non-zero eigenvalues of the \textit{unnormalized} Laplacian of the weighted clique expansion also preserves the $k$ smallest non-zero (and $k$ largest non-equal to 2) eigenvalues of the normalized Laplacian of the bipartite representation.

During hypergraph reconstruction from its bipartite representation, isolated nodes and empty hyperedges (corresponding to empty rows and columns in the incidence matrix) are discarded.

\section{Experiments and Results} \label{sec::experiments}

In this section, we detail our experimental setup, covering datasets and evaluation metrics.
Then, we compare our approach against the following baselines: HyperPA~\citep{Do_2020}, a Variational Autoencoder (VAE)~\cite{kingma2022autoencodingvariationalbayes}, a Generative Adversarial Network (GAN)~\cite{goodfellow2014generativeadversarialnetworks}, and a standard 2D diffusion model~\cite{ho2020denoisingdiffusionprobabilisticmodels} trained on incidence matrix images, where hyperedge membership is represented by white pixels and absence by black pixels.
Finally, we ablate on the spectrum-preserving coarsening and the upper bound for the number of hyperedges defined in Proposition~\ref{prop:upper_bound_edges}.

Our goal is threefold: (i) proving that \method~can generate the desired hyperedge distribution, (ii) proving that \method~can closely mimic a range of strict structural properties, and (iii) proving the importance of our components, \ie spectrum-preserving coarsening and upper bounding the size of hyperedge clusters during coarsening.
Detailed numerical results are available in Appendix~\ref{sec::detailed_results}, and Appendix~\ref{app:visual_examples} provides several visualizations of our generated hypergraphs.

\vspace{0.1cm}

\noindent \textbf{Datasets}.
We evaluate our method on four synthetic hypergraph datasets: Erdős–Rényi (ER)~\cite{ErdösRényi+2006+38+82}, Stochastic Block Model (SBM)~\cite{kim2018stochasticblockmodelhypergraphs}, Ego~\cite{comrie2021hypergraphegonetworkstemporalevolution}, and Tree~\cite{NIEMINEN199935}.
Furthermore, we also test \method~on topologies of low-poly feature-less versions of three classes of ModelNet40~\cite{wu20153dshapenetsdeeprepresentation} converted to hypergraphs: \textit{plant}, \textit{piano}, and \textit{bookshelf}.
Each dataset is split into $128$ training, $32$ validation, and $40$ test hypergraphs.
More details can be found in Appendix~\ref{app:datasets}.

\vspace{0.1cm}

\begin{table*}[t]
    \centering
    \setlength{\tabcolsep}{3pt}  
    \resizebox{\textwidth}{!}{%
    \begin{tabular}{@{}r*{5}{c}*{5}{c}*{5}{c}*{5}{c}@{}}
    \toprule
    \multirow{4}{*}{\textbf{Model}} & \multicolumn{4}{c}{\textbf{Erdos-Renyi Hypergraphs}} & \multicolumn{4}{c}{\textbf{ModelNet40 Piano}} & \multicolumn{4}{c}{\textbf{ModelNet40 Plant}} & \multicolumn{4}{c}{\textbf{ModelNet40 Bookshelf}} \\
    & \multicolumn{4}{c}{($n_{avg} = $ 32, $std = $ 0.07)} &\multicolumn{4}{c}{($n_{avg} = $ 177.29, $std = $ 57.11)} &\multicolumn{4}{c}{($n_{avg} = $ 124.86, $std = $ 87.88)} &\multicolumn{4}{c}{($n_{avg} = $ 119.38, $std = $ 68.20)} \\
    \cmidrule(l){2-5}
    \cmidrule(l){6-9}
    \cmidrule(l){10-13}
    \cmidrule(l){14-18}
    &
    \makecell{\textbf{Node} \\ \textbf{Num} $\downarrow$} &
    \makecell{\textbf{Node} \\ \textbf{Deg} $\downarrow$} &
    \makecell{\textbf{Edge} \\ \textbf{Size} $\downarrow$} &
    \textbf{Spectral $\downarrow$} &
    \makecell{\textbf{Node} \\ \textbf{Num} $\downarrow$} &
    \makecell{\textbf{Node} \\ \textbf{Deg} $\downarrow$} &
    \makecell{\textbf{Edge} \\ \textbf{Size} $\downarrow$} &
    \textbf{Spectral $\downarrow$} &
    \makecell{\textbf{Node} \\ \textbf{Num} $\downarrow$} &
    \makecell{\textbf{Node} \\ \textbf{Deg} $\downarrow$} &
    \makecell{\textbf{Edge} \\ \textbf{Size} $\downarrow$} &
    \textbf{Spectral $\downarrow$} &
    \makecell{\textbf{Node} \\ \textbf{Num} $\downarrow$} &
    \makecell{\textbf{Node} \\ \textbf{Deg} $\downarrow$} &
    \makecell{\textbf{Edge} \\ \textbf{Size} $\downarrow$} &
    \textbf{Spectral $\downarrow$} \\
    \midrule
    HyperPA &\textbf{0.000} &5.530 &0.183 &0.177 &0.825 &9.254 &\textbf{0.023} &\textbf{0.067} &10.83 &6.566 &0.046 &0.061 &8.025 &7.562 &0.044 &\textbf{0.048} \\
    VAE &0.100 &2.140 &0.540 &0.035 &75.35 &8.060 &1.686 &0.396 &76.15 &3.895 &1.573 &0.205 &47.45 &6.190 &1.520 &0.190 \\
    GAN &0.675 &2.560 &0.657 &0.048 &\textbf{0.000} &409.0 &86.38 &0.697 &\textbf{0.000} &378.1 &56.35 &0.364 &\textbf{0.000} &397.2 &46.30 &0.476 \\
    Diffusion &0.050 &2.225 &0.781 &0.014 &0.050 &20.90 &4.192 &0.113 &0.025 &21.03 &3.439 &0.069 &\textbf{0.000} &20.36 &2.346 &0.079 \\[0.2em]
    \cdashline{1-18}\\[-0.8em]
    \method &0.775& \textbf{0.475}& \textbf{0.012}& \textbf{0.006} &42.52 &\textbf{6.290} &0.027 &0.117 &68.38 &\textbf{2.428} &\textbf{0.027} &\textbf{0.034} &69.73 &\textbf{1.050} &\textbf{0.034} &0.068 \\
    \bottomrule
    \end{tabular}%
    }
    \caption{Comparison between \method~and other baselines for the ER and ModelNet40 hypergraphs.}
    \label{tab:hypergraph_results_er_modelnet40}
\end{table*}

\begin{table*}[t]
    \centering
    \setlength{\tabcolsep}{3pt}  
    \resizebox{\textwidth}{!}{%
    \begin{tabular}{@{}*{2}{c}*{4}{c}*{4}{c}*{4}{c}*{3}{c}@{}}
    \toprule
    \multirow{3}{*}{\makecell{\textbf{Upper} \\ \textbf{Bound}}} & \multirow{3}{*}{\makecell{\textbf{Spec. Pr.} \\ \textbf{Coarsen.}}} & \multicolumn{4}{c}{\textbf{SBM Hypergraphs}} & \multicolumn{4}{c}{\textbf{Ego Hypergraphs}} & \multicolumn{4}{c}{\textbf{Tree Hypergraphs}} & \multicolumn{3}{c}{\textbf{Erdős–Rényi Hypergraphs}} \\
    \cmidrule(l){3-6}
    \cmidrule(l){7-10}
    \cmidrule(l){11-14}
    \cmidrule(l){15-17}
    & & 
    \makecell{\textbf{Valid} \\ \textbf{SBM} $\uparrow$} &
    \makecell{\textbf{Node} \\ \textbf{Deg} $\downarrow$} &
    \makecell{\textbf{Edge} \\ \textbf{Size} $\downarrow$} &
    \textbf{Spectral $\downarrow$} &
    \makecell{\textbf{Valid} \\ \textbf{Ego} $\uparrow$} &
    \makecell{\textbf{Node} \\ \textbf{Deg} $\downarrow$} &
    \makecell{\textbf{Edge} \\ \textbf{Size} $\downarrow$} &
    \textbf{Spectral $\downarrow$} &
    \makecell{\textbf{Valid} \\ \textbf{Tree} $\uparrow$} &
    \makecell{\textbf{Node} \\ \textbf{Deg} $\downarrow$} &
    \makecell{\textbf{Edge} \\ \textbf{Size} $\downarrow$} & 
    \textbf{Spectral $\downarrow$} &
    \makecell{\textbf{Node} \\ \textbf{Deg} $\downarrow$} &
    \makecell{\textbf{Edge} \\ \textbf{Size} $\downarrow$} &
    \textbf{Spectral $\downarrow$}\\
    \midrule
    \xmark & \cmark &57.5\%  &1.234  &0.006  &\textbf{0.009}  &77.5\%  &0.861  &0.654  &0.149 &20\%  &0.134 &0.088  &0.014  &1.018  &0.045  &0.009  \\
    \cmark & \xmark &47.5\%  &\textbf{0.148}  &0.005  &0.011 &77.5\%  &0.115  &\textbf{0.146}  &\textbf{0.004} &\textbf{77.5}\%  &0.072  &\textbf{0.015}  &0.026  &1.015  &0.124  &0.014  \\
    \cmark & \cmark &\textbf{65}\%& 0.321& \textbf{0.002}& 0.010& \textbf{90}\%& \textbf{0.063}& 0.220& \textbf{0.004}& \textbf{77.5}\%& \textbf{0.059}& 0.108& \textbf{0.012} &\textbf{0.475}& \textbf{0.012}& \textbf{0.006}\\
    \bottomrule
    \end{tabular}%
    }
    \caption{Ablation studies on the upper bound in the number of hyperedges and the spectrum-preserving coarsening.}
    \label{tab:ablations}
\end{table*}

\noindent \textbf{Metrics}.
Our metrics measure: (i) overall structural similarities like \textit{Node Num} (difference in the number of nodes), \textit{Node Deg} (difference in node degrees), and \textit{Edge Size} (difference in the size of the hyperedges); (ii) topological properties by computing the average difference of the \textit{Spectral} properties.
For datasets with specific structural requirements, \textit{Valid} metrics assess the fraction of generated samples satisfying these properties. 
Lower values indicate better performance for all metrics except \textit{Valid}, where higher is better.
More details can be found in Appendix~\ref{app:metrics}.

\vspace{0.1cm}

\noindent \textbf{Comparison with the Baselines}.
Tables~\ref{tab:hypergraph_SBM_Ego_Tree_results} and \ref{tab:hypergraph_results_er_modelnet40} show the comparison of \method~against our baselines.
We observe that the proposed method effectively captures the edge size distribution and successfully imitates structural properties across datasets.
This is particularly evident in the Ego dataset, where our approach uniquely generates correct ego hypergraphs with a high success rate of 90\%.
In some cases, HyperPA has a similar or better performance compared to \method, but this baseline requires to know a-priori the true node degree and hyperedge distributions on a per-hypergraph basis, whereas our model only requires the desired number of nodes.

The primary advantage of \method~over other baseline approaches lies in its comprehension of hypergraph structure.
This is particularly evident in the \textit{Valid} metrics, where only \method~achieves satisfactory results.
Indeed, node degrees and hyperedge sizes can be replicated by merely outputting the correct density of white pixels on the incidence matrix images.
This explains the satisfactory results sometimes achieved by these baselines for \textit{Node Deg} and \textit{Edge Size}.
However, the underlying structure cannot be captured this way, and our baselines fail the \textit{Valid} metrics.

While one might consider using graph generators to produce hypergraph representations, this approach is infeasible because: (i) generating the clique expansion and recovering the associated hypergraph is an NP-hard problem as it requires the enumeration of all cliques; and (ii) for the bipartite representation, determining which side corresponds to nodes and which to hyperedges is non-trivial.
Additionally, we empirically observe that graph-based models often struggle to generate valid bipartite graphs.
For example, we only obtained 30\% of correct bipartite graphs for both models by \citet{bergmeister2024efficient} and \citet{vignac2023digress}.

\vspace{0.1cm}

\noindent \textbf{Ablation Studies}. Table~\ref{tab:ablations} shows the results of our ablation studies. 
First, we observe that not enforcing an upper limit on the hyperedge cluster sizes makes the hyperedge generation task more difficult: for the four datasets, \textit{Node Deg} greatly increases.
This is especially harmful to the \textit{Valid Tree} and the Ego's \textit{Spectral} metrics as their structure requires very specific sparsity properties.
For the four datasets, the model overestimates the correct number of hyperedges and produces denser hypergraphs.

Then, we also observe that the effects of spectrum-preserving coarsening are more subtle.
SBM and Ego hypergraphs suffer the most from its absence. This is expected since failing to preserve their structure during coarsening reduces the available information the model possesses during generation.
Tree hypergraphs are not heavily impacted due to their relative absence of global structure, whereas ER hypergraphs suffer in the \textit{Node Deg} and \textit{Edge Size} metrics as the model struggles to correctly generate the hyperedges.
This is also expected since the spectrum of the hypergraph contains information on the hyperedge distribution.

\vspace{0.1cm}

\noindent \textbf{Limitations}.
The mesh datasets reveal that \method~faces difficulties in accurately producing the specified number of nodes and hyperedges. The model appears to sample from the underlying distribution of hypergraph sizes rather than adhering to the node count directive provided during the generation process.
Notably, the \textit{Node Num} metric approximates the standard deviation ($std$) of node counts for each dataset.
We hypothesize it stems from an inability to correctly estimate the number of hyperedges. The model then discards the excess nodes by disconnecting them to keep the targeted properties in the remainder of the hypergraph.

\section{Conclusion}
In this work, we introduced \method~which is, to the best of our knowledge, the first attempt at diffusion-based hypergraph generation.
We generalized the iterative local expansion scheme by \citet{bergmeister2024efficient} and the coarsening process by \citet{loukas2018graph} for hypergraphs.
Therefore, we introduced an iterative local expansion procedure for the generation of hyperedges.
We trained a denoising diffusion model and successfully tested the ability of our method to generate hypergraphs sampled from specific distributions. This work provides a key contribution to graph generation, being one of the first able of directly generating hypergraphs. 

\section*{Acknowledgments}
The authors acknowledge the ANR – FRANCE (French National Research Agency) for its financial support of the System On Chip Design leveraging Artificial Intelligence (SODA) project under grant ANR-23-IAS3-0004 and the JCJC project DeSNAP ANR-24-CE23-1895-01. This project has also been partially funded by the Hi!PARIS Center on Data Analytics and Artificial Intelligence.

\bibliography{main}

\clearpage
\newpage

\appendix

\onecolumn

\section{Proofs} \label{sec::proofs}
\subsection{Laplacians equality} \label{sec::laplacians_equality}
\begin{proposition*}[Adapted from Section 6.4 of \citet{HigherOrderLearning}] 
    For an unweighted hypergraph, Bolla's \textit{unnormalized} Laplacian $\mathbf{L}_H = \mathbf{D}_\mathcal{V} - \mathbf{H} \mathbf{D}_\mathcal{E}^{-1} \mathbf{H}^T$ is equal to the \textit{unnormalized} Laplacian of the associated clique expansion $C$, where each edge $e_{uv}$ is weighted by $\sum_{e \ni u, v ; \text{ } e \in \mathcal{E}} \frac{1}{|e|}$.
\end{proposition*}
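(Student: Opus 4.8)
The plan is to prove the identity entrywise: I will compute the $(u,v)$ entry of $\mathbf{L}_H = \mathbf{D}_\mathcal{V} - \mathbf{H}\mathbf{D}_\mathcal{E}^{-1}\mathbf{H}^T$ and compare it to the $(u,v)$ entry of $\mathbf{L}_C = \mathbf{D}_C - \mathbf{W}_C$, the unnormalized Laplacian of the clique expansion $C$ in which the edge $e_{uv}$ carries weight $w_{uv} = \sum_{e \ni u,v,\, e\in\mathcal{E}} \tfrac{1}{|e|}$. The first step is to unpack the matrix product: since $(\mathbf{D}_\mathcal{E}^{-1})_{e,e} = 1/|e|$ and $\mathbf{H}_{v,e} = \mathbf{1}[v\in e]$, one gets $(\mathbf{H}\mathbf{D}_\mathcal{E}^{-1}\mathbf{H}^T)_{u,v} = \sum_{e} \mathbf{H}_{u,e}\,\mathbf{H}_{v,e}/|e| = \sum_{e\ni u,v} 1/|e|$ for every pair $(u,v)$, the diagonal case $u=v$ included.

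Next I would treat the off-diagonal entries $u \neq v$. Here $(\mathbf{D}_\mathcal{V})_{u,v} = 0$, so $(\mathbf{L}_H)_{u,v} = -\sum_{e\ni u,v} 1/|e| = -w_{uv}$, which is exactly $-$ the weight assigned to the clique-expansion edge $e_{uv}$ (and equal to $0$ precisely when $u$ and $v$ share no hyperedge, \ie are non-adjacent in $C$). This coincides with $(\mathbf{L}_C)_{u,v} = -(\mathbf{W}_C)_{u,v} = -w_{uv}$.

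Then I would handle the diagonal. Writing $d_u = |\{e\in\mathcal{E} : u\in e\}|$ for the hypergraph degree of $u$, we have $(\mathbf{D}_\mathcal{V})_{u,u} = d_u$, hence $(\mathbf{L}_H)_{u,u} = d_u - \sum_{e\ni u} 1/|e|$. On the clique side, the weighted degree of $u$ is $(\mathbf{D}_C)_{u,u} = \sum_{v\neq u} w_{uv} = \sum_{v\neq u}\sum_{e\ni u,v} 1/|e|$; swapping the summation order gives $\sum_{e\ni u} \tfrac{1}{|e|}\,|\{v\in e : v\neq u\}| = \sum_{e\ni u} \tfrac{|e|-1}{|e|} = d_u - \sum_{e\ni u} 1/|e|$, which is precisely $(\mathbf{L}_H)_{u,u}$. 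Combining the two cases yields $\mathbf{L}_H = \mathbf{L}_C$ entrywise, as claimed.

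The argument is elementary and there is no real obstacle; the only point requiring care is the diagonal identity, where one must recognize that each hyperedge $e\ni u$ contributes $(|e|-1)/|e|$ to the weighted degree of $u$ — namely the $|e|-1$ clique edges it spawns incident to $u$, each contributing $1/|e|$. I would also remark that the stated weight formula is additive over hyperedges sharing the same pair $\{u,v\}$, so the identity holds even when several hyperedges induce the same clique edge; the unweighted/simple-hypergraph hypothesis only ensures $\mathbf{H}\in\{0,1\}^{|\mathcal{V}|\times|\mathcal{E}|}$, which is what the computation above uses.
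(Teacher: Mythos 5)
Your proof is correct and follows essentially the same entrywise strategy as the paper's: identify the off-diagonal entries with $-w_{uv}$ and the diagonal entries with the weighted clique-expansion degree. If anything, your diagonal computation is the cleaner one — you correctly obtain the common value $(\mathbf{L}_H)_{uu} = d_u - \sum_{e\ni u}\tfrac{1}{|e|} = \sum_{e\ni u}\tfrac{|e|-1}{|e|} = (\mathbf{D}_C)_{uu}$, whereas the paper's displayed chain misattributes the double sum $\sum_{e\ni i}\sum_{j\in e, j\neq i}\tfrac{1}{|e|}$ to $(\mathbf{L}_H)_{ii}$ and lands on $\sum_{e\ni i}\tfrac{1}{|e|}$ for both sides, which is not the actual common value even though the conclusion $\mathbf{L}_H=\mathbf{L}_C$ stands.
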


\begin{proof}
Let $H = (\mathcal{V}, \mathcal{E})$ be a hypergraph and $C$ its weighted clique expansion. The unnormalized Laplacian of $H$ is given by $\mathbf{L}_H = \mathbf{D}_\mathcal{V} - \mathbf{H} \mathbf{D}_\mathcal{E}^{-1} \mathbf{H}^T$.

For indices $i \neq j$,
\begin{equation}
    \left( \mathbf{L}_H \right)_{ij} = -\sum_{e \ni i,j ; e \in \mathcal{E}} \frac{1}{|e|} = -\mathbf{A}_{ij}
\end{equation}
where $\mathbf{A}$ is the adjacency matrix of the weighted clique expansion. For an index $i$ on the diagonal, denoting $d_i$ the number of hyperedges containing node $i$,
\begin{align}
    \left( \mathbf{L}_H \right)_{ii} &= d_i - \sum_{e \ni i ; e \in \mathcal{E}} \sum_{j \in e ; j \neq i} \frac{1}{|e|} = d_i - \sum_{e \ni i ; e \in \mathcal{E}} \frac{|e| - 1}{|e|} = \sum_{e \ni i ; e \in \mathcal{E}} \left( 1 - \frac{|e| - 1}{|e|} \right) = \sum_{e \ni i ; e \in \mathcal{E}} \frac{1}{|e|} = \mathbf{D}_{ii}
\end{align}
where $\mathbf{D}$ is the degree matrix of the weighted clique expansion. Thus $\mathbf{L}_H = \mathbf{L}_C$.
\end{proof}

\subsection{Upper Bound on Hyperedge Cluster Size} \label{sec::upper_bound_size}
\begin{proposition*}
    In a single merging of two adjacent nodes in the clique representation, at most three hyperedges can be involved in each hyperedge merging in the bipartite representation.
\end{proposition*}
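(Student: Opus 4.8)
The plan is to analyze exactly what happens to the right-side (hyperedge) nodes of the bipartite representation when a single pair of adjacent left-side nodes $u, v$ is merged. Recall from Definition~\ref{def::bipartite_coarsening} that after merging $u$ and $v$ into a single cluster node $w$, two right-side nodes $e_1, e_2 \in \mathcal{V}_R$ are merged precisely when they have the same neighborhood in the updated bipartite graph, i.e. $\mathcal{N}(e_1) = \mathcal{N}(e_2)$ after the left-side merge. So I first need to understand which right-side nodes can \emph{change} their neighborhood as a result of merging $u$ and $v$: only those incident to $u$ or to $v$ (but not to both, since a hyperedge already containing both $u$ and $v$ simply has the two entries collapsed to one, which does not alter whether it equals some other neighborhood unless that other one also changed). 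Right-side nodes incident to neither $u$ nor $v$ are untouched, and their equivalence classes are unaffected.

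Next I would partition the ``affected'' right-side nodes according to their neighborhood \emph{before} the merge, restricted to $\{u,v\}$: type $U$ = contains $u$ but not $v$; type $V$ = contains $v$ but not $u$; type $B$ = contains both. After merging $u,v \mapsto w$, a type-$U$ node $e$ has neighborhood $(\mathcal{N}(e)\setminus\{u\})\cup\{w\}$, a type-$V$ node $e'$ has $(\mathcal{N}(e')\setminus\{v\})\cup\{w\}$, and a type-$B$ node has $(\mathcal{N}(e'')\setminus\{u,v\})\cup\{w\}$. The key observation is that for two affected nodes to become newly equivalent, their neighborhoods outside $\{u,v,w\}$ must already have been equal \emph{before} the merge; but if two right-side nodes agreed outside $\{u,v\}$ and also agreed on whether they contained $u$ (resp. $v$), they were already equivalent before coarsening and hence already merged. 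So within a single new equivalence class of affected nodes, at most one can come from each of the ``pre-merge classes'' that differ only in their $\{u,v\}$-pattern, and the only $\{u,v\}$-patterns that can collapse to the same post-merge pattern are: $\{u\}$ and $\{w\text{-from-}u\}$... — more carefully, a type-$U$ node, a type-$V$ node, and a type-$B$ node can all become equal after the merge, since $\{u\}$, $\{v\}$, and $\{u,v\}$ all map to $\{w\}$ while everything else stays fixed; but two distinct type-$U$ nodes cannot newly merge (they were already equal or already distinct and stay distinct). This gives the bound of three: at most one representative each from types $U$, $V$, $B$ can coalesce into a single hyperedge cluster.

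The main obstacle is being careful about the bookkeeping of ``already merged'' vs. ``newly merged'': I must argue that before this coarsening step, the bipartite graph is already reduced with respect to the right-side equivalence relation from the previous iteration, so that any two right-side nodes with identical neighborhoods were previously identified. This ensures that the only way to create a cluster of size $>1$ among affected nodes is via the $\{u\}/\{v\}/\{u,v\} \to \{w\}$ collapse, which by a simple case count caps the cluster at three. I would finish by noting the edge case where one of the three types is empty (e.g. no hyperedge contains both $u$ and $v$, though they are adjacent in the clique expansion via a common hyperedge — actually adjacency forces at least one type-$B$ node, so in fact clusters of exactly two or three arise), and remarking that the bound of three is tight, realized when all three types are nonempty and their restricted neighborhoods coincide.
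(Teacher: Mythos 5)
Your argument is correct, and it reaches the bound of three by a route that is organized differently from the paper's, though the underlying combinatorics is the same. The paper first proves a lemma (at most two hyperedges of the \emph{same size} can merge, via pigeonhole on which of $n_1,n_2$ each contains) and then rules out a cluster of four by a case analysis on sizes, using two proofs by contradiction. You instead classify the affected right-side nodes directly by their trace on $\{u,v\}$ --- the three types $U$, $V$, $B$ --- observe that all three traces collapse to $\{w\}$ while the rest of the neighborhood is fixed, and conclude that a new cluster contains at most one representative of each type because two nodes of the same type with equal post-merge neighborhoods must already have had equal neighborhoods (hence were already identified). This is a cleaner and more transparent accounting: the constant $3$ appears as the number of possible nonempty traces on $\{u,v\}$ rather than as $1+2$ from a size argument, and your version makes the tightness of the bound and the role of the ``already reduced'' invariant explicit, which the paper leaves implicit in its appeal to ``distinct hyperedges.'' You correctly flag the one genuine proof obligation --- that at each coarsening level no two right-side nodes share a neighborhood --- which holds by induction since the right side is quotiented by exactly that equivalence at every step; stating that induction explicitly would close the only remaining gap. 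The minor mid-paragraph hesitation about whether type-$B$ nodes count as ``affected'' is a wording slip only; your subsequent case analysis handles them correctly.
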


\begin{proof}
    We begin by establishing the following lemma:
    \begin{lemma*} For each node merging, a hyperedge can merge with at most one other hyperedge of the same size.
    \end{lemma*}
    
    Let $n_1$ and $n_2$ be the nodes being merged. Begin by this observation: consider two distinct hyperedges $e_1 = (e_{11}, \dots, e_{1k})$ and $e_2 = (e_{21}, \dots, e_{2l})$ (not necessarily of the same size). If both contain $n_1$ and $n_2$, they must be identical after merging. This implies that they are of the same size $k$ and that their other $k-2$ vertices must be the same, as they remain unchanged. Thus, $e_1 = e_2$.
    
    Now, we prove the lemma by contradiction. Assume there exist three hyperedges of the same size $e_1 = (e_{11}, \dots, e_{1k})$, $e_2 = (e_{21}, \dots, e_{2k})$, and $e_3 = (e_{31}, \dots, e_{3k})$ that merge into a cluster when nodes $n_1$ and $n_2$ merge. Only one of $n_1$ or $n_2$ can appear in each hyperedge (otherwise, they would be of different sizes after merging or would violate our initial observation).
        
    For these hyperedges to merge, their $k-1$ other vertices must be identical. With only two possible choices ($n_1$ or $n_2$) for three hyperedges, at least two must be identical, contradicting our assumption of three distinct hyperedges.
    
    \vspace{10pt}
    For the main result, we again use contradiction. Assume four hyperedges $e_1$, $e_2$, $e_3$, and $e_4$ merge into a hyperedge cluster when nodes $n_1$ and $n_2$ merge. By our initial observation, either one hyperedge is of order $k$ and contains both $n_1$ and $n_2$ while the others are of order $k-1$ with only one of the two nodes, or all are of the same order with only one node appearing in each hyperedge.

    In either case, at least three hyperedges of the same order must merge, contradicting our lemma.
\end{proof}

\subsection{Inverting Coarsening through Expansion and Refinement} \label{sec::proof_inversion}
Here we demonstrate that each coarsened bipartite graph (as per Definition \ref{def::coarsening}) can be inverted using specific expansion and refinement steps. Our proof adapts the approach outlined in Appendix A of \cite{bergmeister2024efficient}.

Let $H$ be an arbitrary hypergraph, $C$ its clique representation, and $B = (\mathcal{V}_L, \mathcal{V}_R, \mathcal{E})$ its bipartite representation. Let $\mathcal{P} = (\mathcal{P}_L, \mathcal{P}_R)$ be a partitioning of $\mathcal{V}_L$ and $\mathcal{V}_R$ according to Definition \ref{def::bipartite_coarsening} based on $C$. Furthermore, let $B^c = (\mathcal{V}_L^c, \mathcal{V}_R^c, \mathcal{E}^c) = \bar{B}(B, \mathcal{P})$ denote the coarsened bipartite representation as per Definition \ref{def::bipartite_coarsening}. We will now construct the expansion and refinement vectors that recover the original bipartite graph $B$ from its coarsening $B^c$.

We begin with the expansion by defining vectors $\mathbf{v}_L \in \mathbb{N}^{|\mathcal{P}_L|}$ and $\mathbf{v}_R \in \mathbb{N}^{|\mathcal{P}_R|}$ as follows:
\begin{align} \label{eq::expansion_vectors_setting}
\begin{split}
    \mathbf{v}_L[p] &= |\mathcal{V}_L^{(p)}| \text{ for all } \mathcal{V}_L^{(p)} \in \mathcal{P}_L\\ \mathbf{v}_R[p] &= |\mathcal{V}_R^{(p)}| \text{ for all } \mathcal{V}_R^{(p)} \in \mathcal{P}_R
\end{split}
\end{align}

Let $B^e = (\mathcal{V}_L^e, \mathcal{V}_R^e, \mathcal{E}^e) = \tilde{B}(B^c, \mathbf{v}_L, \mathbf{v}_R)$ represent the expanded graph as defined in Definition \ref{def::expansion}. Notably, the node sets of $B$ and $B^e$ have the same cardinality. Thus, we can establish two bijections $\phi_L : \mathcal{V}_L \rightarrow \mathcal{V}_L^e$ and $\phi_R : \mathcal{V}_R \rightarrow \mathcal{V}_R^e$ between them. Here, the $i$-th node in the $p$-th part of $\mathcal{P}_L$ maps to the corresponding node $v_L^{(p, i)} \in \mathcal{V}_L^e$ in the expanded graph for $\phi_L$, with a similar mapping for $\phi_R$.

This construction ensures that the edge set of $B^e$ is a superset of the original graph $B$'s edge set. To illustrate, consider an arbitrary edge $e_{\{i,j\}} \in \mathcal{E}$. Due to the bipartite nature of the graphs, $v_L^{(i)}$ and $v_R^{(j)}$ must lie in different partitions $\mathcal{V}_L^{(p)}$ and $\mathcal{V}_R^{(q)}$ respectively. An edge in the original representation implies that the parts representing nodes $v_L^{(p)} \in \mathcal{V}_L^c$ and $v_R^{(q)} \in \mathcal{V}_R^c$ are connected in $B^c$. Consequently, when expanding $v_L^{(p)}$ and $v_R^{(q)}$, all $|\mathcal{V}_L^{(p)}|$ nodes associated with $v_L^{(p)}$ connect to all $|\mathcal{V}_R^{(q)}|$ nodes associated with $v_R^{(q)}$ in $B^e$. Specifically, $v_L^{(\phi_L(i))}$ and $v_R^{(\phi_R(j))}$ are connected in $B^e$.

For the refinement step, we define the vector $\mathbf{e} \in \{0, 1\}^{|\mathcal{E}^e|}$ as follows: given an arbitrary ordering of edges in $\mathcal{E}^e$, let $\mathbf{e}_{(i)} \in \mathcal{E}^e$ denote the $i$-th edge in this ordering. We set:
\begin{equation} \label{eq::edge_vector_setting}
\mathbf{e}[i] = 
\begin{cases}
1 & \text{if } e^{\{\phi_L^{-1}(i),\phi_R^{-1}(j)\}} \in \mathcal{E} \\
0 & \text{otherwise}.
\end{cases}
\end{equation}

As per Definition \ref{def::refinement}, the refined graph is then given by $B^r = \bar{B}(B^e, \mathbf{e}) = \bar{B}(\tilde{B}(B^c, \mathbf{v}_L, \mathbf{v}_R), \mathbf{e})$, which is isomorphic to the original bipartite graph $B$.

\subsection{Spectral Equivalence of Hypergraphs and Their Bipartite Representations} \label{sec::spectral_equivalence_bipartite}
\begin{proposition} Let $H$ be a hypergraph and $B$ its bipartite representation. Denote $\mathbf{\mathcal{L}}_H$ and $\mathbf{\mathcal{L}}_B$ their respective \textit{normalized} Laplacian matrix. We have the following equality:
    \begin{equation} 
        Sp(\mathbf{\mathcal{L}}_B) = \left\{ 1 \pm \sqrt{1 - \lambda} \enspace \vert \enspace \lambda \in Sp(\mathbf{\mathcal{L}}_H) \right\} \subset [0, 2]
    \end{equation}
\end{proposition}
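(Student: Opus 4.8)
The plan is to exploit the block structure of the bipartite representation's normalized Laplacian. Writing the star expansion $B$ with left nodes $\mathcal{V}$ and right nodes $\mathcal{E}$, its weighted adjacency matrix has the $2\times 2$ block form $\begin{bmatrix} \mathbf{0} & \mathbf{H} \\ \mathbf{H}^T & \mathbf{0} \end{bmatrix}$, and the degree matrix is block-diagonal with $\mathbf{D}_\mathcal{V}$ on the left block and $\mathbf{D}_\mathcal{E}$ on the right block (since each right node $e$ has degree $|e|$, exactly the entries of $\mathbf{D}_\mathcal{E}$). Hence
\begin{equation}
\boldsymbol{\mathcal{L}}_B = \mathbf{I} - \begin{bmatrix} \mathbf{0} & \mathbf{D}_\mathcal{V}^{-1/2}\mathbf{H}\mathbf{D}_\mathcal{E}^{-1/2} \\ \mathbf{D}_\mathcal{E}^{-1/2}\mathbf{H}^T\mathbf{D}_\mathcal{V}^{-1/2} & \mathbf{0} \end{bmatrix}.
\end{equation}
Setting $\mathbf{M} = \mathbf{D}_\mathcal{V}^{-1/2}\mathbf{H}\mathbf{D}_\mathcal{E}^{-1/2}$, I observe that Zhou's normalized hypergraph Laplacian is exactly $\boldsymbol{\mathcal{L}}_H = \mathbf{I} - \mathbf{M}\mathbf{M}^T$.

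The key step is a standard SVD/eigenvalue correspondence for off-diagonal block matrices. Let $\mathbf{M}$ have singular values $\sigma$; then the symmetric matrix $\begin{bmatrix} \mathbf{0} & \mathbf{M} \\ \mathbf{M}^T & \mathbf{0} \end{bmatrix}$ has eigenvalues $\pm\sigma$ (together with extra zeros if the two sides have different dimensions, which occurs when $|\mathcal{V}| \neq |\mathcal{E}|$). I would establish this by taking a singular vector pair $\mathbf{M}\mathbf{x} = \sigma\mathbf{y}$, $\mathbf{M}^T\mathbf{y} = \sigma\mathbf{x}$ and checking that $(\mathbf{x}, \pm\mathbf{y})$ is an eigenvector with eigenvalue $\pm\sigma$; the leftover kernel dimensions contribute eigenvalue $0$ on whichever side is larger. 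Therefore the eigenvalues of $\boldsymbol{\mathcal{L}}_B$ are $\{1 \pm \sigma\}$ over singular values $\sigma$ of $\mathbf{M}$, plus possibly some eigenvalues equal to $1$ from the dimension mismatch. Since $\sigma^2 \in \Sp(\mathbf{M}\mathbf{M}^T)$ and $\lambda = 1 - \sigma^2 \in \Sp(\boldsymbol{\mathcal{L}}_H)$, we get $\sigma = \sqrt{1-\lambda}$ and hence $\Sp(\boldsymbol{\mathcal{L}}_B) = \{1 \pm \sqrt{1-\lambda} \mid \lambda \in \Sp(\boldsymbol{\mathcal{L}}_H)\}$. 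The containment in $[0,2]$ follows because $\boldsymbol{\mathcal{L}}_B$ is a normalized Laplacian of a graph, whose spectrum always lies in $[0,2]$ (equivalently, $\sigma \le 1$ since $\mathbf{M}\mathbf{M}^T = \mathbf{I} - \boldsymbol{\mathcal{L}}_H$ and $\boldsymbol{\mathcal{L}}_H \succeq 0$).

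The main obstacle — really a bookkeeping subtlety rather than a deep difficulty — is handling the case $|\mathcal{V}| \neq |\mathcal{E}|$ cleanly, so that the statement of the proposition (a set equality, with the eigenvalue $1$ appearing on both sides as needed) is precisely justified; I would note that $1 - \lambda = 0$ corresponds exactly to $\lambda = 1$ and that $\lambda = 1 \in \Sp(\boldsymbol{\mathcal{L}}_H)$ with appropriate multiplicity accounts for the surplus zeros of $\mathbf{M}$, so the set equality holds. I would also remark, to support Equation~\eqref{eq::link_spectra} and the eigenspace claim used in the spectral conditioning, that the eigenvector correspondence $(\mathbf{x}, \pm\mathbf{y})$ above makes the relationship between eigenspaces explicit: an eigenvector $\mathbf{x}$ of $\boldsymbol{\mathcal{L}}_H$ for $\lambda < 1$ lifts to the two eigenvectors $(\mathbf{x}, \pm\mathbf{D}_\mathcal{E}^{-1/2}\mathbf{H}^T\mathbf{D}_\mathcal{V}^{-1/2}\mathbf{x}/\sqrt{1-\lambda})$ of $\boldsymbol{\mathcal{L}}_B$. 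This also invokes Lemma~1 of \cite{HigherOrderLearning} only insofar as it identifies $\mathbf{M}\mathbf{M}^T$ with $\mathbf{I} - \boldsymbol{\mathcal{L}}_H$, which is immediate from the definition of Zhou's Laplacian given in the preliminaries.
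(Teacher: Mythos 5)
Your proof takes essentially the same route as the paper's: both write $\boldsymbol{\mathcal{L}}_B$ in block form with $\mathbf{M} = \mathbf{D}_\mathcal{V}^{-1/2}\mathbf{H}\mathbf{D}_\mathcal{E}^{-1/2}$, identify $\boldsymbol{\mathcal{L}}_H = \mathbf{I} - \mathbf{M}\mathbf{M}^T$, and convert spectra via the relation $\sigma = \lvert 1-\lambda\rvert$, your SVD phrasing being a repackaging of the paper's explicit two-direction eigenvector correspondence. If anything, you are more careful than the paper about the rectangular case $\lvert\mathcal{V}\rvert \neq \lvert\mathcal{E}\rvert$, where both arguments (and indeed the set equality itself, when $\lvert\mathcal{E}\rvert > \lvert\mathcal{V}\rvert$ and $\mathbf{M}$ has full row rank) must account for the surplus eigenvalue $1$ of $\boldsymbol{\mathcal{L}}_B$ coming from the kernel of $\mathbf{M}$.
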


\begin{proof}
    Let $H = (\mathcal{V}, \mathcal{E})$ be a hypergraph, and $B = (\mathcal{V}_L, \mathcal{V_R}, \mathcal{E}_B)$ its bipartite representation, with $\mathcal{V}_B = \mathcal{V}_L \cup \mathcal{V}_R$. Following \cite{HigherOrderLearning}, we can express the \textit{normalized} graph Laplacian of $B$ as:
    \begin{align}
        \mathbf{\mathcal{L}}_B &= \mathbf{I} - \mathbf{D}^{-1/2} \mathbf{A} \mathbf{D}^{-1/2} 
        = \begin{bmatrix}
            \mathbf{I} & -\mathbf{D}_v^{-\frac{1}{2}}\mathbf{H}\mathbf{D}_e^{-\frac{1}{2}} \\
            -\mathbf{D}_e^{-\frac{1}{2}}\mathbf{H}^{\top}\mathbf{D}_v^{-\frac{1}{2}} & \mathbf{I}
         \end{bmatrix} 
        = \begin{bmatrix}
                \mathbf{I} & -\mathbf{M} \\
                -\mathbf{M}^T & \mathbf{I}
            \end{bmatrix}
    \end{align}
    where $\mathbf{M} = \mathbf{D}_v^{-\frac{1}{2}}\mathbf{H}\mathbf{D}_e^{-\frac{1}{2}}$.
    The hypergraph \textit{normalized} Laplacian is then given by $\mathbf{\mathcal{L}}_H = \mathbf{I} - \mathbf{M} \mathbf{M}^T$.

    \textbf{\underline{$\subset$:}} Let $\mu$ be an eigenvalue of $\mathbf{\mathcal{L}}_B$ with eigenvector $\mathbf{v} = \left( \begin{array}{c} \mathbf{x} \\ \mathbf{y} \end{array} \right)$. Then $\mathbf{\mathcal{L}}_B \mathbf{v} = \mu \mathbf{v}$. We find that $\mathbf{y} = 0$ if $\mu = 0$, and $\mathbf{y} = \frac{\mathbf{M}^T\mathbf{x}}{1 - \mu}$ otherwise, implying $\mathbf{v}$ is linear in $\mathbf{x}$. We also have $\mathbf{M} \mathbf{M}^T \mathbf{x} = (1 - \mu)^2 \mathbf{x}$, so $1 - (1 - \mu)^2$ is an eigenvalue of $\mathbf{\mathcal{L}}_H$ with the same multiplicity as in $\mathbf{\mathcal{L}}_B$, \ie all eigenvalue $\mu$ of $\mathbf{\mathcal{L}}_B$ can be written as $1 \pm \sqrt{1 - \lambda}$ with $\lambda$ eigenvalue of $\mathbf{\mathcal{L}}_H$.

    \textbf{\underline{$\supset$:}} Let $\lambda$ be an eigenvalue of $\mathbf{\mathcal{L}}_H$ with eigenvector $\mathbf{x}$. Define $\mu = 1 \pm \sqrt{1 - \lambda}$ and $\mathbf{v} = \left( \begin{array}{c} \mathbf{x} \\ \mathbf{y} \end{array} \right)$. If $\lambda = 0$, set $\mathbf{y} = 0$; otherwise, set $\mathbf{y} = \frac{\mathbf{M}^T \mathbf{x}}{1 - \mu}$. In both cases, $\mathbf{\mathcal{L}}_B \mathbf{v} = \mu \mathbf{v}$, showing that $\mu$ is an eigenvalue of $\mathbf{\mathcal{L}}_B$ with the same multiplicity as $\lambda$ in $\mathbf{\mathcal{L}}_H$.
\end{proof}

\section{Complexity Analysis}

Here we analyze the asymptotic complexity of our algorithm, generalizing the approach by \cite{bergmeister2024efficient}.
To generate a hypergraph of $n$ nodes, $m$ hyperedges and $k$ incidences, our algorithm computes an expansion sequence of bipartite graphs ($B^{(L)} = ( \{1\}, \{2\}, \{(1, 2)\} ), B^{(L-1)}, \dots, B^{(0)} = B)$ where the last one is equal to the bipartite representation of the generated hypergraph.
In the following, $n$, $m$ and $k$ will equivalently denote the number of nodes and number of left side nodes, number of hyperedges and number of right side nodes, and number of incidences and number of edges of respectively the hypergraph and its associated bipartite representation.

First, in our setting there exists $\epsilon > 0$ such that the number of left side nodes $n_l$ of $B^{(l)}$ satisifies $n_l \geq (1+\epsilon)n_{l-1}$ for all step $0 \leq l < L$ in the expansion sequence.
One can take $\epsilon = reduction\_frac / (1 - reduction\_frac)$ for example.
Thus we can bound the length of the expansion sequence by $\lceil \log_{1+\epsilon} n \rceil \in \mathcal{O}(\log n)$.

As expansion only increases the number of nodes, we are assured that all $B_l$ have fewer than $n$ left side nodes and $m$ right side nodes.
A similar guarantee does not hold for the number of edges.
Indeed, nothing prevents an intermediate bipartite graph of having more than $k$ edges that will be removed in subsequent refinement steps.
Nevertheless, as during training coarsening only reduces the number of incidences, it is reasonable to expect the model of accurately refining the edges at each step instead of accumulating them, having learned the reverse process of coarsening.
Hence, for the purpose of this analysis, we assume that $k_l \leq k$ for $0 \leq l \leq L$. A similar reasoning allows to assume that $m_l \leq m$ for $0 \leq l \leq L$.

Now we bound the complexity of generating a step in the expansion sequence.
For $l=L$, this requires instantiating a pair of connected nodes and predicting their expansion numbers $\mathbf{v}_L$, which is of complexity $\mathcal{O}(1)$.

For the other steps $0 \leq l < L$, starting from the bipartite representation $B^{(l+1)}$ and the expansion vectors $\mathbf{v}^{(l+1)}_L$ and $\mathbf{v}^{(l+1)}_R$, the algorithm constructs the expansion $\Tilde{B}(B^{(l+1)}, \mathbf{v}_L^{(l+1)}, \mathbf{v}_R^{(l+1)})$, which is of complexity $\mathcal{O}(n + m)$. Next, it samples $\mathbf{v}_L^{(l)}$, $\mathbf{v}_R^{(l)}$ and $\mathbf{e}^{(l)}$ and constructs the refinement $B^{(l)} = B(\Tilde{B}^{(l)}, \mathbf{e}^{(l)})$.
Let $v^L_{\max}$ and $v^R_{\max}$ denote the maximum size of, respectively, left side clusters and right side clusters, which are 2 and 3 in our case.
The number of incidences in the expansion $\Tilde{B}^{(l)}$ can then be upper bounded by $k_l \leq k_{l+1} v^L_{\max} v^R_{\max}$.

Sampling $\mathbf{v}_L^{(l)}$, $\mathbf{v}_R^{(l)}$ and $\mathbf{e}^{(l)}$ requires querying a denoising model a fixed number of times.
The underlying complexity of this process then depends on the chosen model architecture.
In our setting, as a bipartite graph does not contain any triangle, the \textit{Local PPGN} \cite{bergmeister2024efficient} achieves a linear complexity in the number of nodes and incidences, \ie $\mathcal{O}(n + m + k)$.

Then, the complexity of obtaining the node embeddings for $B^{(l)}$ can be bounded by $\mathcal{O}(n + m + k)$.
Indeed, this requires computing the $K$ main eigenvalues and eigenvectors of the graph laplacian of $B^{(l+1)}$.
Using the method of \cite{vishnoi2013lx}, this is of complexity $\mathcal{O}\left(K (n_{l+1} + m_{l+1} + k_{l+1})\right)$.
Computing the embeddings using \textit{SignNet} is done with complexity $\mathcal{O}\left(K (n_{l+1} + m_{l+1} + k_{l+1}\right))$, and their replication during the expansion is of complexity linear in the number of nodes and hyperedges.
As $K$ is fixed, the final complexity for computing the node embeddings is $\mathcal{O}(n + m + k)$.

Finally, after the expansion sequence is generated, the final bipartite representation is converted into a hypergraph by collapsing all its right side nodes into hyperedges, which is of complexity $\mathcal{O}(m + k)$.

Combining all of the above, under the stated assumptions, the complexity to generate a hypergraph $H$ of $n$ nodes, $m$ hyperedges and $k$ incidences is of $\mathcal{O}((n + m + k)\log n)$.

\section{Implementation Details}
\subsection{Model Architecture} \label{sec::model_architecture}

In our approach, we treat the expansion numbers for the left and right side nodes and the existence of edges as attributes of the bipartite graph. We employ EDM denoising diffusion framework \cite{karras2022elucidatingdesignspacediffusionbased} for modeling $p(\mathbf{v}^{(l)}_L, \mathbf{v}^{(l)}_R, \mathbf{e}^{(l)} | \tilde{B}^{(l)})$. In this framework, the targeted attributes—the expansion numbers and the existence of edges—are noised, and a denoising model is trained to recover the original unnoised attributes.

The architecture of the denoising model is as follows:

\begin{enumerate}
    \item \textbf{Positional Encoding:} We use SignNet \cite{lim2022signbasisinvariantnetworks} to encode the position of each node in the graph and duplicate the encodings based on the expansion numbers.
    
    \item \textbf{Attribute Embedding:} Three linear layers embed the attributes of the bipartite representation: one for the left side nodes, one for the right side nodes, one for the edges of the bipartite representation.
    
    \item \textbf{Feature Concatenation:} 
        \begin{itemize}
            \item For both left and right side nodes: We concatenate feature embeddings, positional encodings, desired reduction fraction, and targeted hypergraph size (\ie the desired number of left side nodes).
            \item For edges: We concatenate feature embeddings, concatenated positional encodings of the two nodes forming the edge, desired reduction fraction, and targeted hypergraph size.
        \end{itemize}
    
    \item \textbf{Graph Processing:} These three sets of vectors are then used as attributes of the bipartite graph, which is fed into a succession of sparse PPGN layers (see \cite{bergmeister2024efficient}).
    
    \item \textbf{Output Generation:} The attributes of the resulting graph after the final layer are fed to three linear layers to produce the final predictions: one for the left side nodes, one for the right side nodes, one for the edges.
\end{enumerate}

\subsection{Additional Tricks}
\label{sec::miscellaneous}

\noindent \textbf{Deterministic Expansion Size}. Our expansion method typically samples two cluster size vectors $\mathbf{v}_L$ and $\mathbf{v}_R$ to enlarge the graph incrementally until $\mathbf{v}_L$ consists entirely of ones. However, this stochastic approach may not consistently yield graphs of a specified size. Following \cite{bergmeister2024efficient}, to address this, we use a deterministic expansion strategy by predetermining the expanded graph's target size. Instead of sampling $\mathbf{v}_L$, we select nodes with the highest expansion probabilities to achieve the desired size. We also use the proposed reduction fraction of \cite{bergmeister2024efficient} as an additional input during training and inference, calculated as one minus the ratio of node counts between the original and expanded graphs.  Note that this only affects $\mathbf{v}_L$, $\mathbf{v}_R$ is sampled without any limitation, playing a similar role as regular edges in the graph case. Appendix \ref{sec::end_to_end} provides further details on this approach.

\vspace{0.1cm}

\noindent \textbf{Perturbed Expansion}. Again following \citet{bergmeister2024efficient}, while Definitions \ref{def::expansion} and \ref{def::refinement} are enough to reverse coarsening steps, we introduce additional randomness in the expansion process to improve generative performance, especially for limited datasets prone to overfitting. Our perturbed expansion concept randomly adds with a given probability edges between nodes of both sides of the bipartite representation that are within a predefined radius in $B$, supplementing the edges in $\tilde{\mathcal{E}}$.

\begin{figure}[htbp]
    \centering
    \includegraphics[width=0.25\columnwidth]{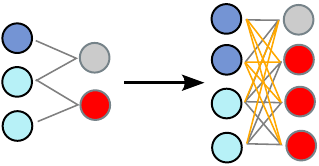}
    \caption{Depiction of a perturbed expansion. The bipartite representation $B^{(l)}$ is expanded into $\tilde{B}^{(l-1)}$ using the cluster size vectors. Deterministic expansion components are represented by gray edges, whereas additional orange edges are added for a radius of $r = 2$ and a probability of $p = 1$. With $p < 1$, a subset of these edges would be randomly excluded.}
    \label{fig:perturbed_expansion}
\end{figure}

The following definition formalizes the concept of a randomized hypergraph expansion, which is a generalization of the deterministic hypergraph expansion introduced in Definition \ref{def::expansion}. A visual representation of this concept is provided in Figure \ref{fig:perturbed_expansion}.

\begin{definition}[Perturbed Hypergraph Expansion] \label{def::perturbed_expansion}
Given a bipartite representation $B = (\mathcal{V}_L, \mathcal{V}_R, \mathcal{E})$, two cluster size vectors $\mathbf{v}_L \in \mathbb{N}^{|\mathcal{V}_L|}$ and $\mathbf{v}_R \in \mathbb{N}^{|\mathcal{V}_R|}$, a radius $r \in \mathbb{N}$, and a probability $0 \leq p \leq 1$, the perturbed expansion $\tilde{B}$ is constructed as in Definition \ref{def::expansion}, and additionally for all distinct nodes $\mathbf{v}_L{(p)} \in \tilde{\mathcal{V}_L}, \mathbf{v}_R{(q)} \in \tilde{\mathcal{V}}_R$ whose distance in $B$ is at most $2r+1$, we add each edge $e_{\{p_i,q_j\}}$ independently to $\tilde{\mathcal{E}}$ with probability $p$.
\end{definition}

\vspace{0.1cm}

\noindent \textbf{Spectral Conditioning}. Following \citet{martinkus2022spectrespectralconditioninghelps} and \citet{bergmeister2024efficient}, we use principal \textit{normalized} Laplacian eigenvalues and eigenvectors of the target graph as conditional information during generation, which is known to improve generation quality. When generating $B^{(l)}$ from its coarser version $B^{(l+1)}$, we exploit the spectral preservation during coarsening to approximate $B^{(l)}$'s normalized Laplacian spectrum: we compute the $k$ smallest non-zero eigenvalues and corresponding eigenvectors of $B^{(l+1)}$'s \textit{normalized} Laplacian matrix $\mathbf{\mathcal{L}}^{(l+1)}$. Using SignNet (see \cite{lim2022signbasisinvariantnetworks}), we derive node embeddings for $B^{(l+1)}$, which are then replicated across expansion sets to initialize the embeddings of $B^{(l)}$, facilitating cluster identification. The value of $k$ is chosen as a hyperparameter.

\section{Experimental Details}
\subsection{Datasets}
\label{app:datasets}

Our experimental evaluation employs four synthetic and three real-world datasets:
\begin{itemize}
    \item \textbf{Erdos-Renyi hypergraphs} \cite{ErdösRényi+2006+38+82}\textbf{:} These consist of 32 nodes, with 2-edges selected with 0.1 probability, 3-edges with 0.005 probability, and 4-edges with 0.0005 probability.
    \item \textbf{Stochastic Block Model hypergraphs} \cite{kim2018stochasticblockmodelhypergraphs}\textbf{:} Composed of 32 nodes evenly distributed between two groups. All hyperedges are 3-edges, with inter-group edges selected with 0.001 probability and intra-group edges with 0.05 probability.
    \item \textbf{Ego hypergraphs}  \cite{comrie2021hypergraphegonetworkstemporalevolution}\textbf{:} Generated by first creating a hypergraph of 150-200 nodes with 3000 random hyperedges (up to 5 nodes each). A random node is then selected, and the ego hypergraph is constructed by retaining only the hyperedges containing this node.
    \item \textbf{Tree hypergraphs} \cite{NIEMINEN199935}\textbf{:} We construct a tree-structured hypergraph by first generating a tree with 32 nodes using \textit{networkx}. Subsequently, we iteratively combine adjacent edges to form hyperedges, with each hyperedge encompassing up to 5 nodes.
    \item \textbf{ModelNet40 meshes} \cite{wu20153dshapenetsdeeprepresentation}\textbf{:} We convert mesh topologies from selected classes of the ModelNet40 dataset into hypergraphs. To manage computational complexity, we create low-poly versions with fewer than 1000 vertices. The classes used are \textit{bookshelf}, \textit{piano}, and \textit{plant}. Low-poly versions are created by iteratively merging vertices closer than a threshold until the desired vertex count is achieved. Duplicate triangles are then consolidated.
\end{itemize}
All datasets are randomly partitioned into 128 hypergraphs for training, 32 for validation, and 40 for testing.

\subsection{Evaluation Metrics}
\label{app:metrics}

We assess the generated hypergraphs against the test dataset using the following metrics:
\begin{itemize}
    \item \textbf{NodeNumDiff:} Average difference in node count between target and generated hypergraphs.
    \item \textbf{NodeDegreeDistrWasserstein:} Wasserstein distance between node degree distributions of test and generated hypergraphs.
    \item \textbf{EdgeSizeDistrWasserstein:} Wasserstein distance between edge size distributions of test and generated hypergraphs.
    \item \textbf{Spectral:} Maximum Mean Discrepancy between Laplacian spectra of test and generated hypergraphs.
    \item \textbf{Uniqueness:} Proportion of non-isomorphic generated hypergraphs.
    \item \textbf{Novelty:} Proportion of generated hypergraphs non-isomorphic to a sample from the training set.
    \item \textbf{CentralityCloseness, CentralityBetweenness, CentralityHarmonic:} Wasserstein distances between distributions of each centrality measure for test and generated hypergraphs, computed on edges for $s = 1$. For details, see \cite{aksoy2020hypernetworksciencehighorderhypergraph}.
    \item \textbf{ValidEgo:} Applicable only to the \textit{hypergraphEgo} dataset, measuring the proportion of generated hypergraphs that are valid ego hypergraphs (\ie containing an ego node present in all hyperedges).
    \item \textbf{ValidSBM:} Applicable only to the \textit{hypergraphSBM} dataset, measuring the proportion of generated hypergraphs that are valid SBM hypergraphs (\ie containing two clusters with approximately the same probability of intra and inter-cluster hyperedges than the training set).
    \item \textbf{ValidTree:} Applicable only to the \textit{hypergraphTree} dataset, measuring the proportion of generated hypergraphs that are valid tree hypergraphs.
\end{itemize}

\subsection{Baseline Methods}
We evaluate our method against the following baseline approaches:
\begin{itemize}
    \item \textbf{HyperPA} \cite{Do_2020}: An algorithmic method for generating hypergraphs.
    \item \textbf{Incidence Matrix Image-based Models}: We implement three simple baselines - Diffusion, GAN, and VAE - that generate hypergraphs via incidence matrix images:
    \begin{itemize}
        \item These models are trained to produce images representing incidence matrices, where white pixels denote a node's presence in a hyperedge, and black pixels indicate absence.
        \item We randomly permute rows and columns, then pad with black pixels to maintain consistent dimensions.
        \item A thresholding operation is applied to the generated images to obtain the final incidence matrices.
    \end{itemize}
\end{itemize}

\section{Coarsening Sequence Sampling} \label{sec::coarsening}
This section outlines our methodology for sampling a coarsening sequence $\pi \in \Pi_F(H)$ for a given hypergraph $H$. Algorithm \ref{alg::hypergraph_coarsening} presents our approach in detail.

Consider a coarsening step $l$ where $H^{(l)}$ denotes the coarsened hypergraph, $B^{(l)}$ its bipartite representation, and $C^{(l)}$ its weighted clique expansion. The process begins by sampling a reduction fraction $red\_frac$ from the interval $[\rho_\text{min}, \rho_\text{max}]$. We then compute the cost of all contraction sets $F(C^{(l)})$ for the weighted clique expansion, with lower costs indicating higher contraction preference.

We employ a greedy and randomized strategy, inspecting contraction sets from lowest to highest cost. For each set:

\begin{itemize}
    \item The set is rejected with probability $1 - \lambda$.
    \item If accepted:
    \begin{itemize}
        \item \textbf{First accepted contraction:} We compute the coarsened weighted clique expansion (Definition \ref{def::coarsening}) and the coarsened bipartite representation (Definition \ref{def::bipartite_coarsening}). The contraction is added to the set of applied contractions.

        \item \textbf{Subsequent accepted contractions:} If any node in the contraction set already belongs to an accepted contraction, we reject it. Otherwise, we compute the coarsened representations based on previously accepted contractions. If all right-side clusters in the bipartite representation comprise at most three nodes of $B^{(l)}$, we add the contraction to the set of applied contractions. If not, we reject it and revert to the previous coarsened representation.
    \end{itemize}
\end{itemize}

The process terminates when $|\mathcal{V}_L| - |\bar{\mathcal{V}_L}| > red\_frac * |\mathcal{V}_L|$, \ie, when the number of nodes in the hypergraph has been reduced by $red\_frac$. Note that only the left side (corresponding to the hypergraph nodes) is considered in this stopping criterion.

This approach is flexible, allowing for various choices of cost function $c$, contraction family $F$, reduction fraction range $[\rho_\text{min}, \rho_\text{max}]$, and randomization parameter $\lambda$.

\vspace{0.5cm}

\noindent \textbf{Practical Considerations}. To address the potential imbalance caused by an abundance of small graphs in the coarsening sequence, we adopt a strategy similar to \cite{bergmeister2024efficient}. Specifically, when the current graph has fewer than 16 nodes, we automatically set the reduction fraction $\rho$ to its maximum value, $\rho_\text{max}$.

Due to the constraint that for any coarsening step, no right side cluster can contain more than three nodes, it is not always feasible to achieve a partitioning that achieves the desired reduction fraction. Empirically, provided that $\rho_\text{max}$ is small enough, this is rarely the case in practice.

During the training phase, our approach involves sampling a coarsening sequence from each dataset graph. However, we only utilize a graph from a randomly selected level within this sequence. As a result, the practical implementation of Algorithm 1 is designed to return a coarsened graph with its associated node and edge features, rather than the entire coarsening sequence $\pi$.

To enhance computational efficiency, we kept the caching mechanism of \cite{bergmeister2024efficient}. Upon generating a coarsening sequence, we store its components in a cache. During training, we then randomly select a level, return the corresponding graph and features, and remove this element from the cache. This approach allows to avoid unnecessary recomputation, as we only need to regenerate the coarsening sequence for a specific graph when all cached elements have been exhausted.

\vspace{0.5cm}

\noindent \textbf{Hyperparameters}. 
In all experiments described in Section \ref{sec::experiments}, we use the following settings:
\begin{itemize}
    \item Contraction family: The set of all edges in the clique representation, \ie, $F(C) = \mathcal{E}$, for a weighted clique expansion $C = (\mathcal{V}, \mathcal{E})$.
    \item Cost function: Local Variation Cost \cite{loukas2018graph} with a preserving eigenspace size of $k = 8$.
    \item Reduction fraction range: $[\rho_\text{min}, \rho_\text{max}] = [0.1, 0.3]$.
    \item Randomization parameter: $\lambda = 0.3$.
\end{itemize}

\begin{algorithm*}[hbtp]
\caption{\textbf{Hypergraph Coarsening Sequence Sampling}: This algorithm demonstrates the process of Random Coarsening Sequence Sampling, detailing how a coarsening sequence is sampled
for a given hypergraph. Starting with the initial hypergraph, it iteratively computes the costs of all possible contraction sets in the clique representation, samples a reduction fraction, and uses a greedy randomized strategy to find a cost-minimizing partition of the contraction sets such that no right side cluster in the bipartite representation has a size larger than 3, dynamically computing the coarsened representations of the hypergraph. The process repeats until the bipartite representation is reduced to a single pair of connected nodes.}
\label{alg::hypergraph_coarsening}
\begin{algorithmic}[1]
\Parameters contraction family $F$, cost function $c$, reduction fraction range $[\rho_\text{min}, \rho_\text{max}]$, randomization parameter $\lambda$
\Input hypergraph $H$
\Output coarsening sequence $\pi = (H^{(0)}, \ldots, H^{(L)}) \in \Pi_F(H)$
\Function{HypergraphCoarseningSeq}{$H$}
    \State $H^{(0)} \gets H$, $B^{(0)} \gets \text{BipartiteRepresentation}(H^{(0)})$, $C^{(0)} \gets \text{WeightedCliqueExpansion}(H^{(0)})$
    \State $\pi \gets (B^{(0)})$
    \State $l \gets 0$
    \While{$|\mathcal{V}_L^{(L)}| > 1$}
        \State $l \gets l + 1$
        \State $\text{red\_frac} \sim \text{Uniform}([\rho_\text{min}, \rho_\text{max}])$ \Comment{random reduction fraction}
        \State $f \gets c(\cdot, C^{(l)}, (\mathcal{P}^{(l)}, \ldots, \mathcal{P}^{(l-1)}))$ \Comment{cost function for clique expansion}
        \State $\text{accepted\_contractions} \gets \emptyset$
        \For{$S \in \text{SortedByCost}(F(C^{(l-1)}))$}
            \If{$\text{Random}() > \lambda$}
                \If{$S \cap (\bigcup_{P \in \text{accepted\_contractions}} P) = \emptyset$}
                    \State $C_\text{temp} \gets \text{CoarsenCliqueExpansion}(C^{(l-1)}, S)$
                    \State $B_\text{temp} \gets \text{CoarsenBipartite}(B^{(l-1)}, S)$
                    \If{$\forall \text{ right cluster } R \in B_\text{temp}: |R| \leq 3$}
                        \State $\text{accepted\_contractions} \gets \text{accepted\_contractions} \cup \{S\}$
                        \State $C^{(l)} \gets C_\text{temp}$, $B^{(l)} \gets B_\text{temp}$
                    \EndIf
                \EndIf
            \EndIf
            \If{$|\mathcal{V}_L^{(l-1)}| - |\bar{\mathcal{V}}_L^{(L)}| > \text{red\_frac} \cdot |\mathcal{V}_L^{(l-1)}|$}
                \State \textbf{break}
            \EndIf
        \EndFor
        \State $\pi \gets \pi \cup \{B^{(l)}\}$
    \EndWhile
    \State \Return $\pi$
\EndFunction
\end{algorithmic}
\end{algorithm*}

\newpage
\clearpage
\section{End-to-end Training and Sampling} \label{sec::end_to_end}

In this section we detail the end-to-end training and sampling procedures in Algorithm \ref{alg::train}
and Algorithms \ref{alg::sample_deterministic}. Both algorithms assume the deterministic expansion size setting,
described in Section \ref{sec::miscellaneous}. If deterministic expansion size is deactivated, the only difference
during the training phase is that the model is not conditioned on the reduction fraction. The corresponding sampling procedure in this setting is described in Algorithm \ref{alg::sample}. All mentioned algorithms rely on the node embedding computation procedure described in Algorithm \ref{alg::embeddings}.

\begin{algorithm*}[hbtp]
\caption{\textbf{Node embedding computation:} Here we describe the way left and right side node
embeddings are computed for a given bipartite representation of a hypergraph. Embeddings are computed for the input bipartite representation and then replicated according to the cluster size vectors.}
\label{alg::embeddings}
\begin{algorithmic}[1]
\Parameters number of spectral features $k$
\Input bipartite representation $B = (\mathcal{V}_L, \mathcal{V}_R, \mathcal{E})$, spectral feature model SignNet$_\theta$, cluster size vector $\mathbf{v}_L$ and $\mathbf{v}_R$
\Output node embeddings computed for all nodes in $\mathcal{V}_L$ and $\mathcal{V}_R$ and replicated according to $\mathbf{v}_L$ and $\mathbf{v}_R$
\Function{Embeddings}{$B = (\mathcal{V}_L, \mathcal{V}_R, \mathcal{E})$, SignNet$_\theta$, $\mathbf{v}_L$, $\mathbf{v}_R$}
    \If{$k = 0$}
        \State $\mathbf{H} = [h^{(1)}, \ldots, h^{(|\mathcal{V}|)}] \overset{i.i.d.}{\sim} \mathcal{N}(0, I)$ \Comment{Sample random embeddings}
    \Else
        \If{$k < |\mathcal{V}|$}
            \State $[\lambda_1, \ldots, \lambda_k], [u_1, \ldots, u_k] \gets \text{EIG}(B)$ \Comment{Compute $k$ spectral features}
        \Else
            \State $[\lambda_1, \ldots, \lambda_{|\mathcal{V}_L|+|\mathcal{V}_R|-1}], [u_1, \ldots, u_{|\mathcal{V}_L|+|\mathcal{V}_R|-1}] \gets \text{EIG}(B)$ \Comment{Compute $|\mathcal{V}_L| + |\mathcal{V}_R| - 1$ spectral features}
            \State $[\lambda_{|\mathcal{V}_L| + |\mathcal{V}_R|}, \ldots, \lambda_k], [u_{|\mathcal{V}_L| + |\mathcal{V}_R|}, \ldots, u_k] \gets [0, \ldots, 0], [0, \ldots, 0]$ \Comment{Pad with zeros}
        \EndIf
        \State $\mathbf{H} = [h^{(1)}, \ldots, h^{(|\mathcal{V}_L| + |\mathcal{V}_R|)}] \gets \text{SignNet}_\theta([\lambda_1, \ldots, \lambda_k], [u_1, \ldots, u_k], B)$
    \EndIf
    \State $\tilde{B} = (\mathcal{V}_L^{(1)} \cup \cdots \cup \mathcal{V}_L^{(p_l)}, \mathcal{V}_R^{(1)} \cup \cdots \cup \mathcal{V}_R^{(p_r)}, \tilde{\mathcal{E}}) \gets \tilde{B}(B, \mathbf{v}_L, \mathbf{v}_R)$ \Comment{Expand as per Definition \ref{def::expansion}}
    \State set $\tilde{B}$ s.t. for all $p_L \in [|\mathcal{V}_L|]$ and all $p_R \in [|\mathcal{V}_R|]$: for all $\mathbf{v}_L^{(p_i)} \in \mathcal{V}_L^{(p_l)}$, $\tilde{\mathbf{H}}[p_i] = \mathbf{H}[p_l]$ and for all $\mathbf{v}_R^{(p_i)} \in \mathcal{V}_R^{(p_r)}$, $\tilde{\mathbf{H}}[p_i] = \mathbf{H}[p_r]$ \Comment{Replicate embeddings}
    \State \Return $\tilde{\mathbf{H}}$
\EndFunction
\end{algorithmic}
\end{algorithm*}

\begin{algorithm*}[htbp]
\caption{\textbf{End-to-end training procedure:} This describes the entire training procedure for our
model.}
\label{alg::train}
\begin{algorithmic}[1]
\Parameters number of spectral features $k$ for node embeddings
\Input dataset $\mathcal{D} = \{H_1, \ldots, H_N\}$, denoising model GNN$_\theta$, spectral feature model SignNet$_\theta$
\Output trained model parameters $\theta$
\Function{Train}{$\mathcal{D}$, GNN$_\theta$, SignNet$_\theta$}
    \While{not converged}
        \State $H \sim \text{Uniform}(\mathcal{D})$ \Comment{Sample graph}
        \State $(B^{(0)}, \ldots, B^{(L)}) \gets \text{RndRedSeq}(H)$ \Comment{Sample coarsening sequence by Algorithm \ref{alg::hypergraph_coarsening}}
        \State $l \sim \text{Uniform}(\{0, \ldots, L\})$ \Comment{Sample level}
        \If{$l = 0$}
            \State $\mathbf{v}_L^{(L)} \gets 1$, $\mathbf{v}_R^{(L)} \gets 1$
        \Else
            \State set $\mathbf{v}_L^{(l)}$ and $\mathbf{v}_R^{(l)}$ as in \eqref{eq::expansion_vectors_setting}, s.t. the node sets of $\tilde{B}(B^{(l)}, \mathbf{v}_L^{(l)}, \mathbf{v}_R^{(l)})$ equals that of $B^{(l-1)}$
        \EndIf

        \If{$l = L$}
            \State $B^{(l+1)} \gets B^{(l)} = ( \{1\}, \{2\}, \{ (1, 2)\})$
            \State $\mathbf{v}_L^{(l+1)} \gets 1$
            \State $\mathbf{v}_R^{(l+1)} \gets 1$
            \State $\mathbf{e}^{(l)} \gets 1$
        \Else
            \State set $\mathbf{v}_L^{(l+1)}$ and $\mathbf{v}_R^{(l+1)}$ as in \eqref{eq::expansion_vectors_setting}, s.t. the node sets of $\tilde{B}(B^{(l+1)}, \mathbf{v}_L^{(l+1)}, \mathbf{v}_R^{(l+1)})$ equals that of $B^{(L)}$
            \State set $\mathbf{e}^{(l)}$ as in Eq. \eqref{eq::edge_vector_setting}, s.t. $B(\tilde{B}(B^{(l+1)}, \mathbf{v}_L^{(l+1)}, \mathbf{v}_R^{(l+1)}), \mathbf{e}^{(l)}) = B^{(L)}$
        \EndIf
        
        \State $\mathbf{H}^{(l)} \gets \text{Embeddings}(B^{(l+1)}, \text{SignNet}_\theta, \mathbf{v}_L^{(l+1)}, \mathbf{v}_R^{(l+1)})$ \Comment{Compute node embeddings}
        \State $\hat{\rho} \gets 1 - (n^{(l)}/n^{(l-1)})$, with $n^{(l)}$ and $n^{(l-1)}$ being the size of the left side of $B^{(l)}$ and $B^{(l-1)}$
        \State $D_\theta \gets \text{GNN}_\theta(\cdot, \cdot, \tilde{B}^{(l)}, \mathbf{H}^{(l)}, n^{(0)}, \rho)$, where $n^{(0)}$ is the size of the left side of $B^{(0)}$
        \State take gradient descent step on $\nabla_\theta \text{DiffusionLoss}(\mathbf{v}_L^{(L)}, \mathbf{v}_R^{(L)}, \mathbf{e}^{(l)}, D_\theta)$
    \EndWhile
    \State \Return $\theta$
\EndFunction
\end{algorithmic}
\end{algorithm*}

\begin{algorithm*}[htbp]
\caption{\textbf{End-to-end sampling procedure with deterministic expansion size:} This describes
the sampling procedure with the deterministic expansion size setting, described in Section \ref{sec::miscellaneous}. Note
that this assumes that the maximum cluster sizes are 2 and 3, which is the case when using edges of the clique representation as the contraction set family for model training.}
\label{alg::sample_deterministic}
\begin{algorithmic}[1]
\Parameters reduction fraction range $[\rho_\text{min}, \rho_\text{max}]$
\Input target hypergraph size $N$, denoising model GNN$_\theta$, spectral feature model SignNet$_\theta$
\Output sampled hypergraph $H = (\mathcal{V}, \mathcal{E})$ with $|\mathcal{V}| = N$
\Function{Sample}{$N$, GNN$_\theta$, SignNet$_\theta$}
    \State $B = (\mathcal{V}_L, \mathcal{V}_R, \mathcal{E}) \gets (\{1\}, \{2\}, \{(1, 2)\})$ \Comment{Start with a minimal bipartite graph}
    \State $\mathbf{v}_L \gets [1]$, $\mathbf{v}_R \gets [1]$ \Comment{Initial cluster size vectors}
    \While{$|\mathcal{V}_L| < N$}
        \State $\mathbf{H} \gets \text{Embeddings}(B, \text{SignNet}_\theta, \mathbf{v}_L, \mathbf{v}_R)$ \Comment{Compute node embeddings}
        \State $n \gets \|\mathbf{v}_L\|_1$
        \State $\rho \sim \text{Uniform}([\rho_\text{min}, \rho_\text{max}])$ \Comment{random reduction fraction}
        \State set $n^+$ s.t. $n^+ = \lceil\rho(n + n^+)\rceil$ \Comment{number of left side nodes to add}
        \State $n^+ \gets \min(n^+, N - n)$ \Comment{ensure not to exceed target size}
        \State $\hat{\rho} \gets 1 - (n/(n + n^+))$ \Comment{actual reduction fraction}
        \State $D_\theta \gets \text{GNN}_\theta(\cdot, \cdot, \tilde{B}(B, \mathbf{v}_L, \mathbf{v}_R), \mathbf{H}, N, \hat{\rho})$
        \State $(\mathbf{v}_L)_0, (\mathbf{v}_R)_0, (\mathbf{e})_0 \gets \text{Sample}(D_\theta)$ \Comment{Sample features}
        \State set $\mathbf{v}_L$ s.t. for $i \in [n]$: $\mathbf{v}_L[i] = 2$ if $|\{j \in [n] \mid (\mathbf{v}_L)_0[j] \geq (\mathbf{v}_L)_0[i]\}| \geq n^+$ and $v[i] = 1$ otherwise
        \State set $\mathbf{v}_R$ s.t. for $i \in [|(\mathbf{v}_R)_0|]$: $\mathbf{v}_R[i] = 1$ if $(\mathbf{v}_R)_0 < 1.66$, $\mathbf{v}_R[i] = 2$ if $(\mathbf{v}_R)_0 < 2.33$ and $\mathbf{v}_R[i] = 3$ otherwise
        \State set $\mathbf{e}$ s.t. for $i \in [|(\mathbf{e})_0|]$: $\mathbf{e}[i] = 1$ if $(\mathbf{e})_0 > 0.5$ and $\mathbf{e}[i] = 0$ otherwise
        \State $B = (\mathcal{V}_L, \mathcal{V}_R, \mathcal{E}) \gets B(\tilde{B}, \mathbf{e})$ \Comment{Refine as per Definition \ref{def::refinement}}
    \EndWhile
    \State build $H$ from its bipartite representation $B$
    \State \Return $H$
\EndFunction
\end{algorithmic}
\end{algorithm*}

\begin{algorithm*}[htbp]
\caption{\textbf{End-to-end sampling procedure:} This describes the entire sampling procedure without the deterministic expansion size setting.}
\label{alg::sample}
\begin{algorithmic}[1]
\Input target graph size $N$, denoising model GNN$_\theta$, spectral feature model SignNet$_\theta$
\Output sampled graph $G = (\mathcal{V}, \mathcal{E})$
\Function{Sample}{$N$, GNN$_\theta$, SignNet$_\theta$}
    \State $B = (\mathcal{V}_L, \mathcal{V}_R, \mathcal{E}) \gets (\{1\}, \{2\}, \{(1, 2)\})$ \Comment{Start with a minimal bipartite graph}
    \State $\mathbf{v}_L \gets [1]$, $\mathbf{v}_R \gets [1]$ \Comment{Initial cluster size vectors}
    \While{$|\mathcal{V}| < N$}
        \State $\mathbf{H} \gets \text{Embeddings}(B, \text{SignNet}_\theta, \mathbf{v}_L, \mathbf{v}_R)$ \Comment{Compute node embeddings}
        \State $D_\theta \gets \text{GNN}_\theta(\cdot, \cdot, \tilde{B}(B, \mathbf{v}_L, \mathbf{v}_R), \mathbf{H}, N, \hat{\rho})$
        \State $(\mathbf{v}_L)_0, (\mathbf{v}_R)_0, (\mathbf{e})_0 \gets \text{SDESample}(D_\theta)$ \Comment{Sample feature embeddings}
        \State set $\mathbf{v}_L$ s.t. for $i \in [|(\mathbf{v}_L)_0|]$: $\mathbf{v}_L[i] = 1$ if $(\mathbf{v}_L)_0 < 1.5$ and $\mathbf{v}_L[i] = 2$ otherwise
        \State set $\mathbf{v}_R$ s.t. for $i \in [|(\mathbf{v}_R)_0|]$: $\mathbf{v}_R[i] = 1$ if $(\mathbf{v}_R)_0 < 1.66$, $\mathbf{v}_R[i] = 2$ if $1.66 \leq (\mathbf{v}_R)_0 < 2.33$ and $\mathbf{v}_R[i] = 3$ otherwise
        \State set $\mathbf{e}$ s.t. for $i \in [|(\mathbf{e})_0|]$: $\mathbf{e}[i] = 1$ if $(\mathbf{e})_0 > 0.5$ and $\mathbf{e}[i] = 0$ otherwise
        \State $B = (\mathcal{V}_L, \mathcal{V}_R, \mathcal{E}) \gets B(\tilde{B}, \mathbf{e})$ \Comment{Refine as per Definition \ref{def::refinement}}
    \EndWhile
    \State build $H$ from its bipartite representation $B$
    \State \Return $H$
\EndFunction
\end{algorithmic}
\end{algorithm*}

\newpage
\clearpage

\section{Detailed Results} \label{sec::detailed_results}

\begin{table*}[h!]
    \centering
    \setlength{\tabcolsep}{3pt}  
    \resizebox{\textwidth}{!}{%
    \begin{tabular}{@{}l*{9}{c}*{10}{c}@{}}
    \toprule
    \multirow{3}{*}{\textbf{Model}} & \multicolumn{9}{c}{\textbf{Erdos-Renyi Hypergraphs} ($n_{avg} = $ 32, $std = $ 0.07)} & \multicolumn{10}{c}{\textbf{SBM Hypergraphs} ($n_{avg} = $ 31.73, $std = $ 0.55)} \\
    \cmidrule(l){2-10}
    \cmidrule(l){11-20}
    &
    \makecell{\textbf{Node} \\ \textbf{Num} $\downarrow$} &
    \makecell{\textbf{Node} \\ \textbf{Deg} $\downarrow$} &
    \makecell{\textbf{Edge} \\ \textbf{Size} $\downarrow$} &
    \makecell{\textbf{Spec-} \\ \textbf{tral} $\downarrow$} &
    \makecell{\textbf{Uniq.} $\uparrow$}&
    \makecell{\textbf{Nov.} $\uparrow$}&
    \makecell{\textbf{Cent.} \\ \textbf{Close} $\downarrow$} &
    \makecell{\textbf{Cent.} \\ \textbf{Betw.} $\downarrow$} &
    \makecell{\textbf{Cent.} \\ \textbf{Harm.} $\downarrow$} &
    \makecell{\textbf{Valid} \\ \textbf{SBM} $\uparrow$} &
    \makecell{\textbf{Node} \\ \textbf{Num} $\downarrow$} &
    \makecell{\textbf{Node} \\ \textbf{Deg} $\downarrow$} &
    \makecell{\textbf{Edge} \\ \textbf{Size} $\downarrow$} &
    \makecell{\textbf{Spec-} \\ \textbf{tral} $\downarrow$} &
    \makecell{\textbf{Uniq.} $\uparrow$}&
    \makecell{\textbf{Nov.} $\uparrow$}&
    \makecell{\textbf{Cent.} \\ \textbf{Close} $\downarrow$} &
    \makecell{\textbf{Cent.} \\ \textbf{Betw.} $\downarrow$} &
    \makecell{\textbf{Cent.} \\ \textbf{Harm.} $\downarrow$} \\
    \midrule
    HyperPA &\textbf{0.000} &5.530 &0.183 &0.177 &\textbf{1} &\textbf{1} &0.078 &0.014 &107.1 &2.5\% &\textbf{0.075} &4.062 &0.407 &0.273 &\textbf{1} &\textbf{1} &0.074 &0.008 &77.84 \\
    VAE &0.100 &2.140 &0.540 &0.035 &\textbf{1} &\textbf{1} &0.079 &0.008 &13.50 &0\% &0.375 &1.280 &1.059 &0.024 &\textbf{1} &\textbf{1} &\textbf{0.007} &0.006 & 6.543\\
    GAN &0.675 &2.560 &0.657 &0.048 &\textbf{1} &\textbf{1} &0.101 &0.011 &17.16 &0\% &1.200 &2.106 &1.203 &0.059 &\textbf{1} &\textbf{1} &0.076 &0.012 &10.70 \\
    Diffusion &0.050 &2.225 &0.781 &0.014 &\textbf{1} &\textbf{1} &0.048 &0.003 &11.53 &0\% &0.150 &1.717 &1.390 &0.031 &\textbf{1} &\textbf{1} &0.040 &0.004 &13.94 \\[0.2em]
    \cdashline{1-20}\\[-0.8em]
    HYGENE &0.775& \textbf{0.475}& \textbf{0.012}& \textbf{0.006}&\textbf{1} &\textbf{1} & \textbf{0.009}& \textbf{2.6e-4}& \textbf{2.127}& \textbf{65\%}& 0.525& \textbf{0.321}& \textbf{0.002}& \textbf{0.010}&\textbf{1} &\textbf{1} & 0.016& \textbf{4.4e-4}& \textbf{2.990} \\
    \bottomrule
    \end{tabular}%
    }
    \caption{Evaluation metrics for Erdos-Renyi and SBM hypergraphs}
    \label{tab:hypergraph_results}
\end{table*}

\vspace{0.5cm}

\begin{table*}[h!]  
    \centering
    \setlength{\tabcolsep}{3pt}  
    \resizebox{\textwidth}{!}{%
    \begin{tabular}{@{}l*{10}{c}*{10}{c}@{}}
    \toprule
    \multirow{3}{*}{\textbf{Model}} & \multicolumn{10}{c}{\textbf{Ego Hypergraphs} ($n_{avg} = $ 109.71, $std = $ 10.23)} & \multicolumn{9}{c}{\textbf{Tree hypergraphs} ($n_{avg} = $ 32, $std = $ 0)} \\
    \cmidrule(l){2-11}
    \cmidrule(l){12-21}
    &
    \makecell{\textbf{Valid} \\ \textbf{Ego} $\uparrow$} &
    \makecell{\textbf{Node} \\ \textbf{Num} $\downarrow$} &
    \makecell{\textbf{Node} \\ \textbf{Deg} $\downarrow$} &
    \makecell{\textbf{Edge} \\ \textbf{Size} $\downarrow$} &
    \makecell{\textbf{Spec-} \\ \textbf{tral} $\downarrow$} &
    \makecell{\textbf{Uniq.} $\uparrow$}&
    \makecell{\textbf{Nov.} $\uparrow$}&
    \makecell{\textbf{Cent.} \\ \textbf{Close} $\downarrow$} &
    \makecell{\textbf{Cent.} \\ \textbf{Betw.} $\downarrow$} &
    \makecell{\textbf{Cent.} \\ \textbf{Harm.} $\downarrow$} &
    \makecell{\textbf{Valid} \\ \textbf{Tree} $\uparrow$} &
    \makecell{\textbf{Node} \\ \textbf{Num} $\downarrow$} &
    \makecell{\textbf{Node} \\ \textbf{Deg} $\downarrow$} &
    \makecell{\textbf{Edge} \\ \textbf{Size} $\downarrow$} &
    \makecell{\textbf{Spec-} \\ \textbf{tral} $\downarrow$} &
    \makecell{\textbf{Uniq.} $\uparrow$}&
    \makecell{\textbf{Nov.} $\uparrow$}&
    \makecell{\textbf{Cent.} \\ \textbf{Close} $\downarrow$} &
    \makecell{\textbf{Cent.} \\ \textbf{Betw.} $\downarrow$} &
    \makecell{\textbf{Cent.} \\ \textbf{Harm.} $\downarrow$} \\
    \midrule
    HyperPA &0\% &35.83 &2.590 &0.423 &0.237 &\textbf{1} &\textbf{1} &0.354 &0.002 &143.0 &0\% &2.350 &0.315 &0.284 &0.159 &\textbf{1} &\textbf{1} &0.477 &0.168 &5.941\\
    VAE &0\% &47.58 &0.803 &1.458 &0.133 &\textbf{1} &\textbf{1} &0.558 &0.019 &38.95 &0\% &9.700 &0.072 &0.480 &0.124 &\textbf{1} &\textbf{1} &0.280 &0.139 &3.869 \\
    GAN &0\% &60.35 &0.917 &1.665 &0.230 &\textbf{1} &\textbf{1} &0.612 &0.015 &41.80 &0\% &6.0 &0.151 &0.469 &0.089 &\textbf{1} &\textbf{1} &0.201 &0.124 &2.198 \\
    Diffusion &0\% &\textbf{4.475} &3.984 &2.985 &0.190 &\textbf{1} &\textbf{1} &0.407 &0.009 &6.911 &0\% &2.225 &1.718 &1.922 &0.127 &\textbf{1} &\textbf{1} &0.353 &0.139 &8.565 \\[0.2em]
    \cdashline{1-21}\\[-0.8em]
    Ours & \textbf{90\%}& 12.55& \textbf{0.063}& \textbf{0.220}& \textbf{0.004}&\textbf{1} &\textbf{1} & \textbf{0.025}& \textbf{8.95e-5}& \textbf{5.790}& \textbf{77.5\%}& \textbf{0}& \textbf{0.059}& \textbf{0.108}& \textbf{0.012}&\textbf{1} &\textbf{1} & \textbf{0.041}& \textbf{0.016}& \textbf{1.099}\\
    \bottomrule
    \end{tabular}%
    }
    \caption{Evaluation metrics for Ego and Tree hypergraphs}
    \label{tab:hypergraph_results_ego_tree} 
\end{table*}

\vspace{0.5cm}

\begin{table*}[h!] 
    \centering
    \setlength{\tabcolsep}{3pt}  
    \resizebox{\textwidth}{!}{%
    \begin{tabular}{@{}l*{9}{c}*{9}{c}@{}}
    \toprule
    \multirow{3}{*}{\textbf{Model}} & \multicolumn{9}{c}{\textbf{ModelNet40 Plant} ($n_{avg} = $ 124.86, $std = $ 87.88)} & \multicolumn{9}{c}{\textbf{ModelNet40 Bookshelf} ($n_{avg} = $ 119.38, $std = $ 68.20)} \\
    \cmidrule(l){2-10}
    \cmidrule(l){11-19}
    &
    \makecell{\textbf{Node} \\ \textbf{Num} $\downarrow$} &
    \makecell{\textbf{Node} \\ \textbf{Deg} $\downarrow$} &
    \makecell{\textbf{Edge} \\ \textbf{Size} $\downarrow$} &
    \makecell{\textbf{Spec-} \\ \textbf{tral} $\downarrow$} &
    \makecell{\textbf{Uniq.} $\uparrow$}&
    \makecell{\textbf{Nov.} $\uparrow$}&
    \makecell{\textbf{Cent.} \\ \textbf{Close} $\downarrow$} &
    \makecell{\textbf{Cent.} \\ \textbf{Betw.} $\downarrow$} &
    \makecell{\textbf{Cent.} \\ \textbf{Harm.} $\downarrow$} &
    \makecell{\textbf{Node} \\ \textbf{Num} $\downarrow$} &
    \makecell{\textbf{Node} \\ \textbf{Deg} $\downarrow$} &
    \makecell{\textbf{Edge} \\ \textbf{Size} $\downarrow$} &
    \makecell{\textbf{Spec-} \\ \textbf{tral} $\downarrow$} &
    \makecell{\textbf{Uniq.} $\uparrow$}&
    \makecell{\textbf{Nov.} $\uparrow$}&
    \makecell{\textbf{Cent.} \\ \textbf{Close} $\downarrow$} &
    \makecell{\textbf{Cent.} \\ \textbf{Betw.} $\downarrow$} &
    \makecell{\textbf{Cent.} \\ \textbf{Harm.} $\downarrow$} \\
    \midrule
    HyperPA &10.82 &6.566 &0.046 &0.061 &\textbf{1} &\textbf{1} &0.266 &0.009 &932.9 &8.025 &7.562 &0.044 &\textbf{0.048} &\textbf{1} &\textbf{1} &0.211 &0.005 &877.5 \\
    VAE &76.15 &3.895 &1.573 &0.205 &\textbf{1} &\textbf{1} &\textbf{0.230} &\textbf{0.005} &\textbf{73.50} &47.45 &6.190 &1.520 &0.190 &\textbf{1} &\textbf{1} &\textbf{0.145} &\textbf{0.003} &113.6\\
    GAN &\textbf{0} &378.1 &56.35 &0.364 &\textbf{1} &\textbf{1} &0.782 &0.012 &644.8 &\textbf{0.000} &397.2 &46.30 &0.476 &\textbf{1} &\textbf{1} &0.707 &0.007 &670.1 \\
    Diffusion &0.025 &21.03 &3.439 &0.069 &\textbf{1} &\textbf{1} &0.319 &0.010 &270.2 &\textbf{0.000} &20.36 &2.346 &0.079 &\textbf{1} &\textbf{1} &0.239 &0.006 &264.1 \\[0.2em]
    \cdashline{1-19}\\[-0.8em]
    Ours &68.38 &\textbf{2.428} &\textbf{0.027} &\textbf{0.034} &\textbf{1} &\textbf{1} &0.263 &0.009 &197.1 &69.73 &\textbf{1.050} &\textbf{0.034} &0.068 &\textbf{1} &\textbf{1} &0.204 &0.004 &\textbf{27.40} \\
    \bottomrule
    \end{tabular}%
    }
    \caption{Evaluation metrics for ModelNet40 Plant and ModelNet40 Bookshelf}
    \label{tab:modelnet40_results}
\end{table*}

\vspace{0.5cm}

\begin{table*}[h!] 
    \centering
    \setlength{\tabcolsep}{3pt}  
    \resizebox{0.5\textwidth}{!}{%
    \begin{tabular}{@{}l*{9}{c}@{}}
    \toprule
    \multirow{3}{*}{\textbf{Model}} & \multicolumn{9}{c}{\textbf{ModelNet40 Piano} ($n_{avg} = $ 177.29, $std = $ 57.11)} \\
    \cmidrule(l){2-10}
    &
    \makecell{\textbf{Node} \\ \textbf{Num} $\downarrow$} &
    \makecell{\textbf{Node} \\ \textbf{Deg} $\downarrow$} &
    \makecell{\textbf{Edge} \\ \textbf{Size} $\downarrow$} &
    \makecell{\textbf{Spec-} \\ \textbf{tral} $\downarrow$} &
    \makecell{\textbf{Uniq.} $\uparrow$}&
    \makecell{\textbf{Nov.} $\uparrow$}&
    \makecell{\textbf{Cent.} \\ \textbf{Close} $\downarrow$} &
    \makecell{\textbf{Cent.} \\ \textbf{Betw.} $\downarrow$} &
    \makecell{\textbf{Cent.} \\ \textbf{Harm.} $\downarrow$} \\
    \midrule
    HyperPA &0.825 &9.254 &\textbf{0.023} &\textbf{0.067} &\textbf{1} &\textbf{1} &\textbf{0.236} &0.004 &\textbf{77.84} \\
    VAE &75.35 &8.060 &1.686 &0.396 &\textbf{1} &\textbf{1} &0.241 &0.003 &184.3 \\
    GAN &\textbf{0.000} &409.0 &86.38 &0.697 &\textbf{1} &\textbf{1} &0.738 &0.005 &622.2 \\
    Diffusion &0.050 &20.90 &4.192 &0.113 &\textbf{1} &\textbf{1} &0.303 &0.004 &289.3 \\[0.2em]
    \cdashline{1-10}\\[-0.8em]
    Ours &42.52 &\textbf{6.290} &0.027 &0.117 &\textbf{1} &\textbf{1} &0.285 &\textbf{0.002} &155.0 \\
    \bottomrule
    \end{tabular}%
    }

    \caption{Evaluation metrics for ModelNet40 Piano}
    \label{tab:hypergraph_results_piano}
\end{table*}

\newpage
\section{Comparison Between Training and Generated Samples} \label{app:visual_examples}

\begin{figure}[h]
    \centering
    \begin{subfigure}[b]{0.47\textwidth}
        \centering
        \begin{subfigure}[b]{0.47\textwidth}
            \centering
            \includegraphics[width=\textwidth]{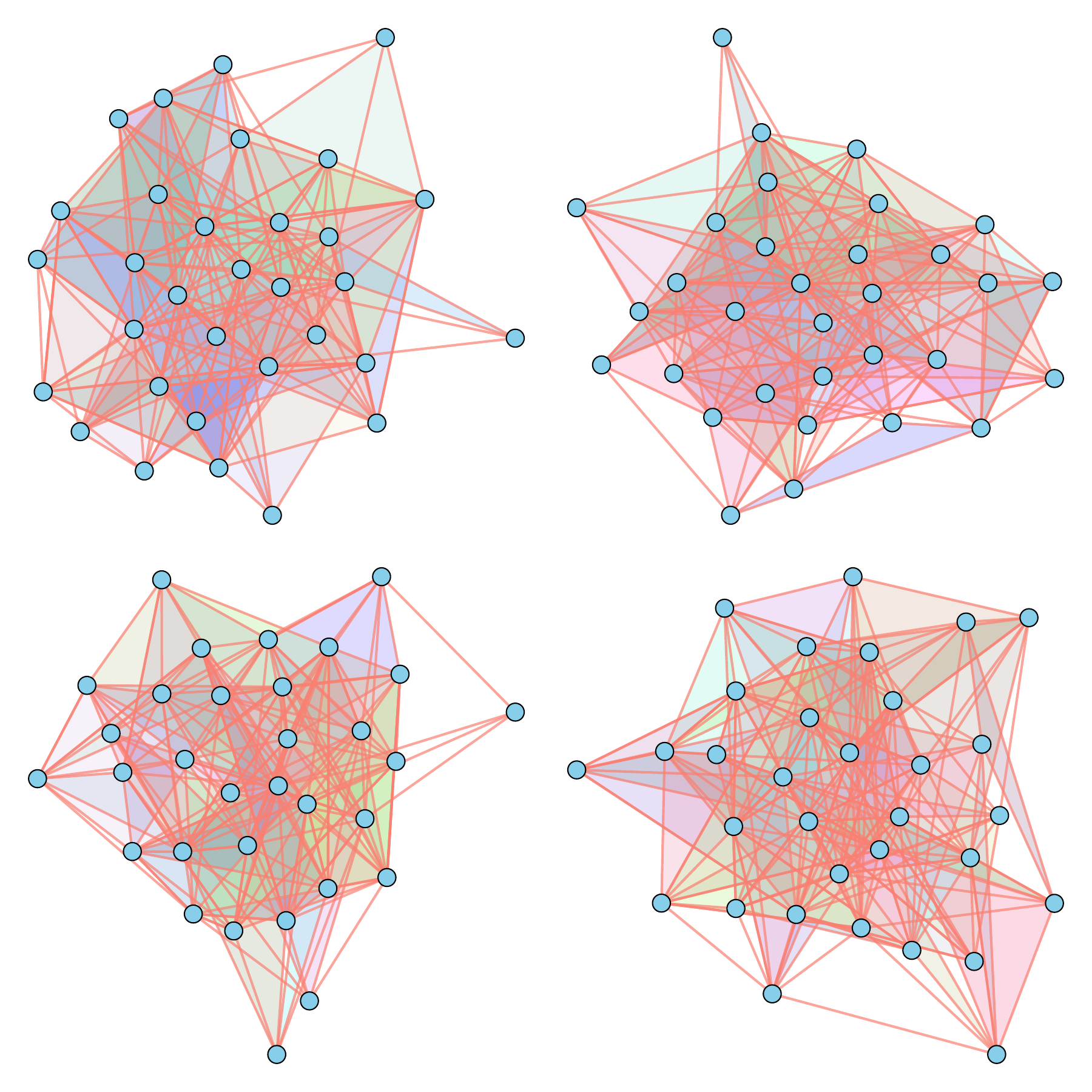}
            \caption*{Train samples}
        \end{subfigure}
        \hfill
        \begin{subfigure}[b]{0.47\textwidth}
            \centering
            \includegraphics[width=\textwidth]{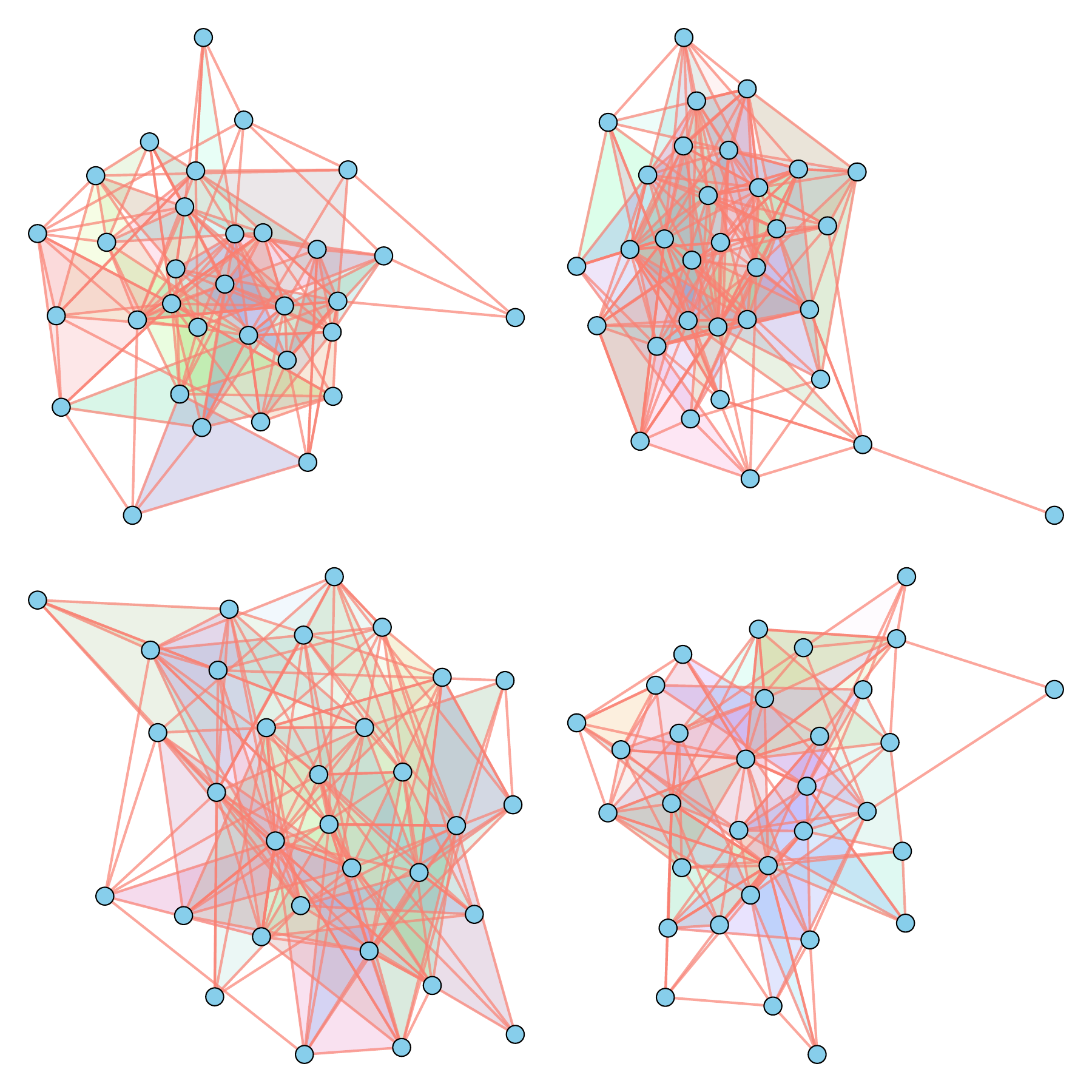}
            \caption*{Generated samples}
        \end{subfigure}
        \vspace{-0.15cm}
        \caption{Erdos-Renyi hypergraphs}
    \end{subfigure}
    \hfill
    \begin{subfigure}[b]{0.47\textwidth}
        \centering
        \begin{subfigure}[b]{0.47\textwidth}
            \centering
            \includegraphics[width=\textwidth]{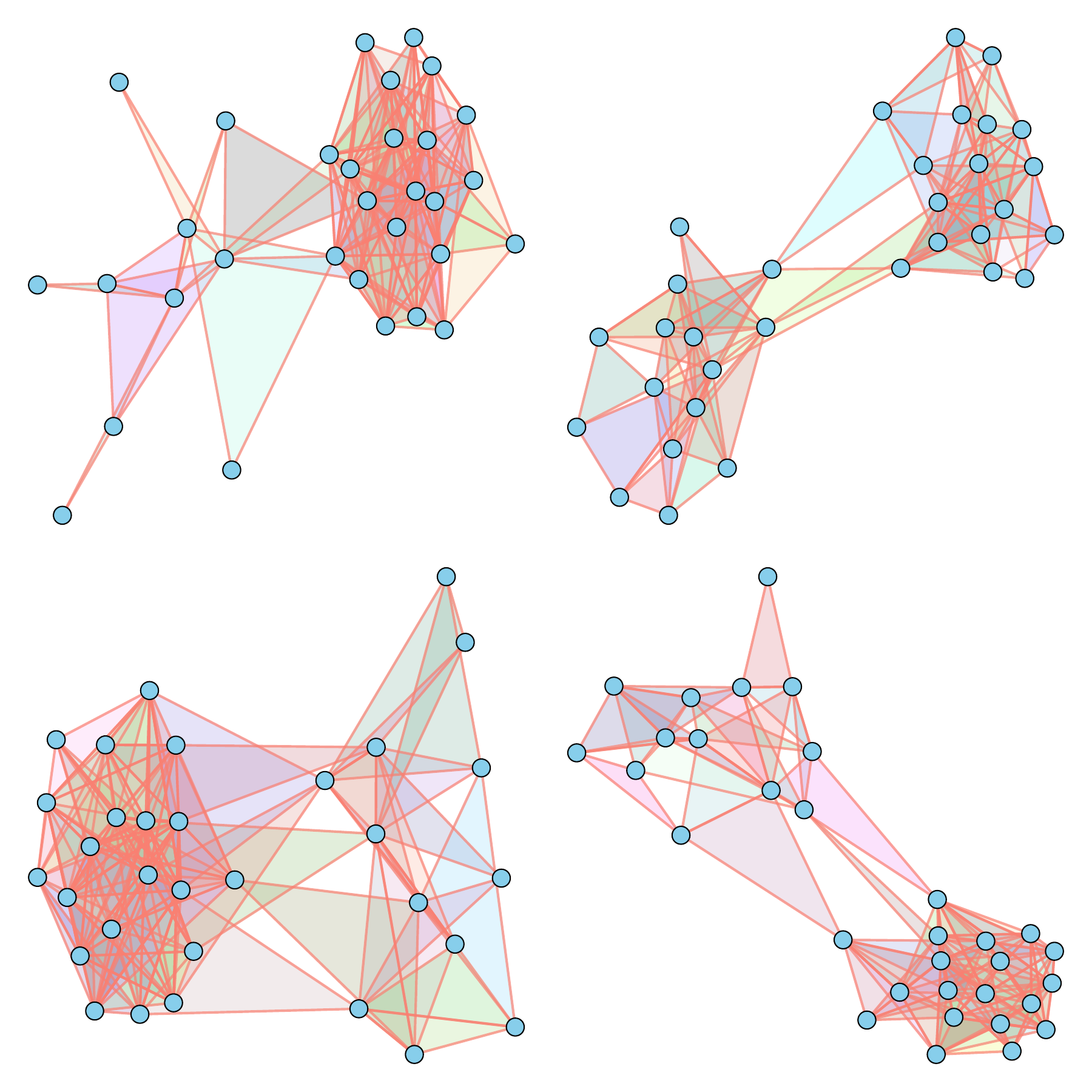}
            \caption*{Train samples}
        \end{subfigure}
        \hfill
        \begin{subfigure}[b]{0.47\textwidth}
            \centering
            \includegraphics[width=\textwidth]{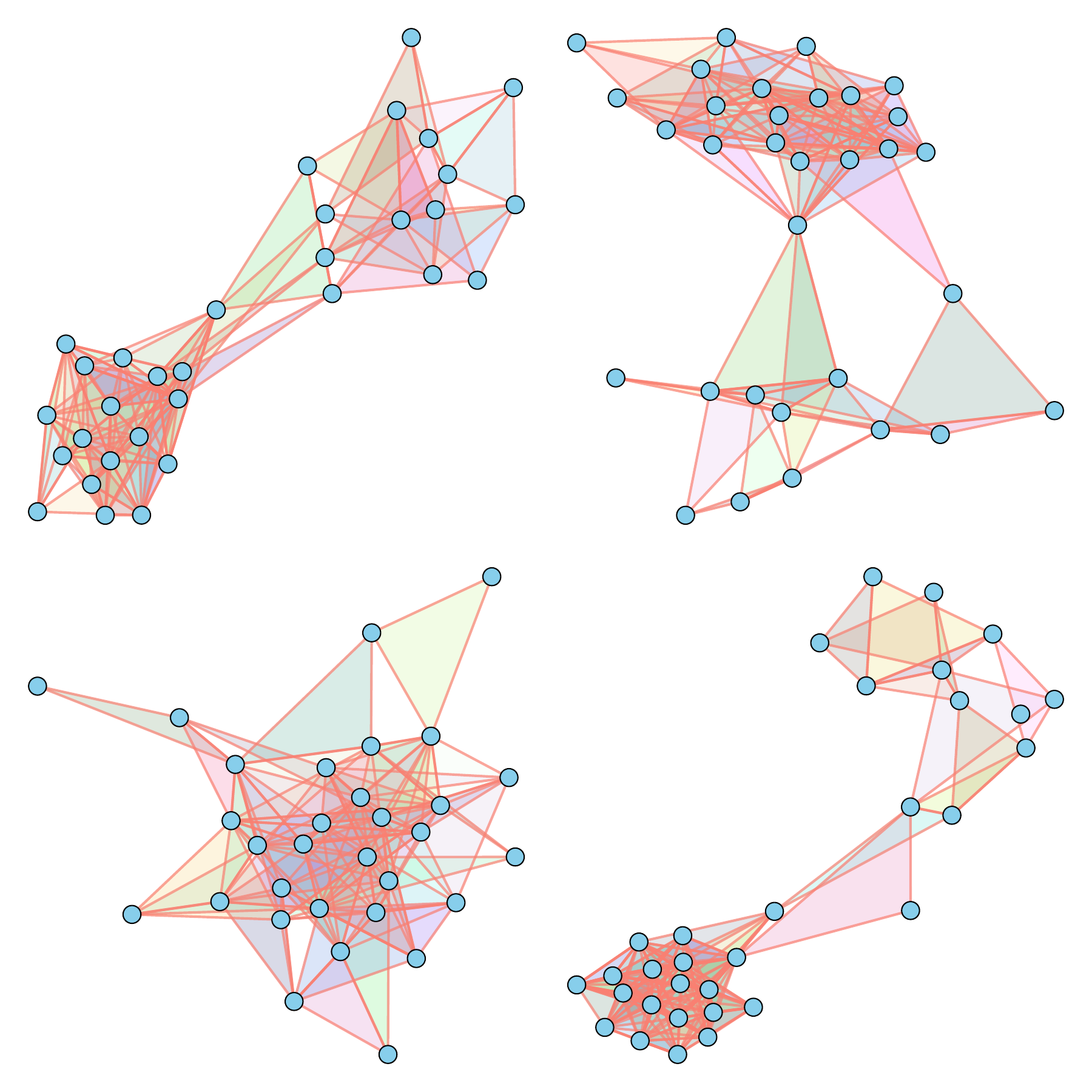}
            \caption*{Generated samples}
        \end{subfigure}
        \vspace{-0.15cm}
        \caption{Stochastic Block Model hypergraphs}
    \end{subfigure}

     \vspace{0.31cm}

    \begin{subfigure}[b]{0.47\textwidth}
        \centering
        \begin{subfigure}[b]{0.47\textwidth}
            \centering
            \includegraphics[width=\textwidth]{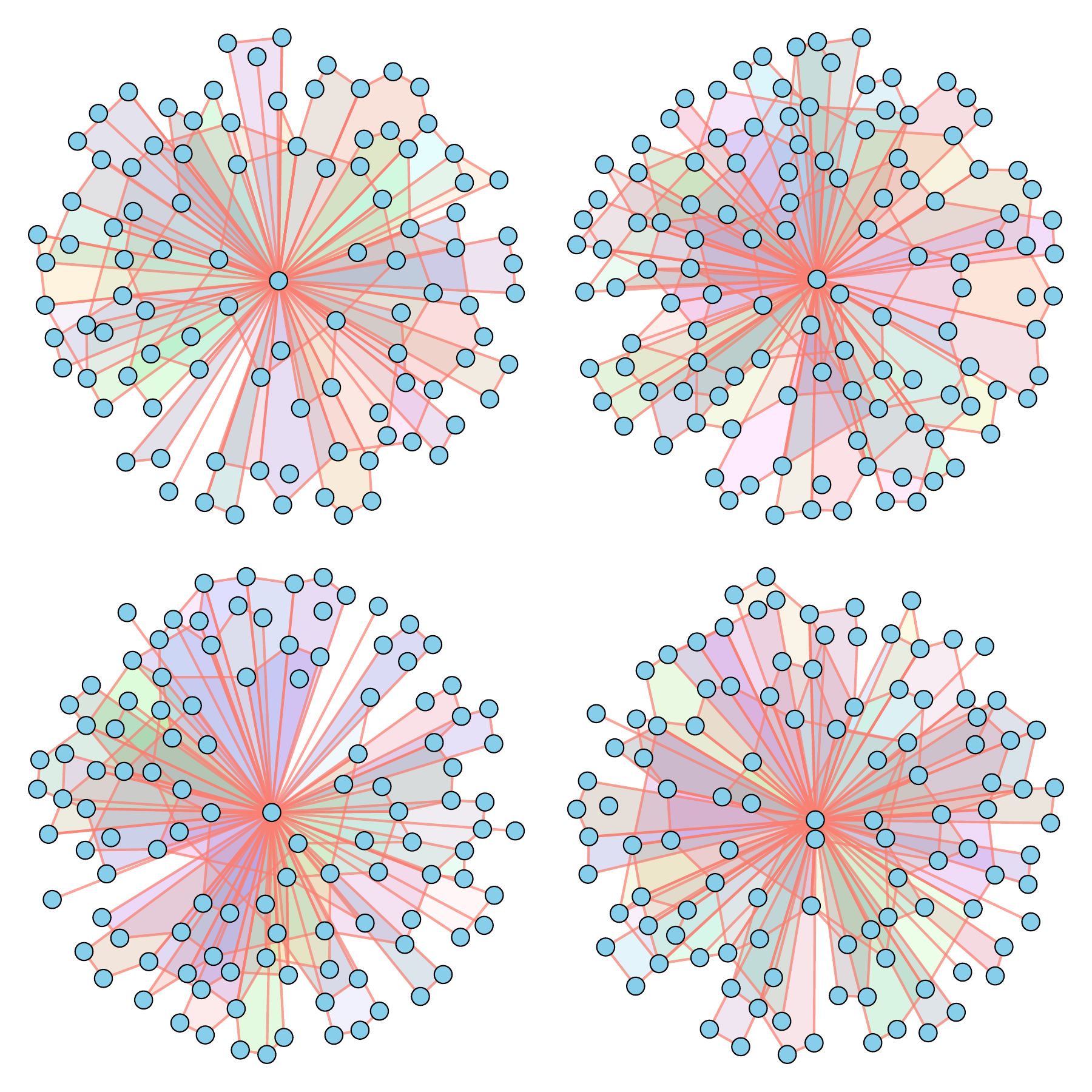}
            \caption*{Train samples}
        \end{subfigure}
        \hfill
        \begin{subfigure}[b]{0.47\textwidth}
            \centering
            \includegraphics[width=\textwidth]{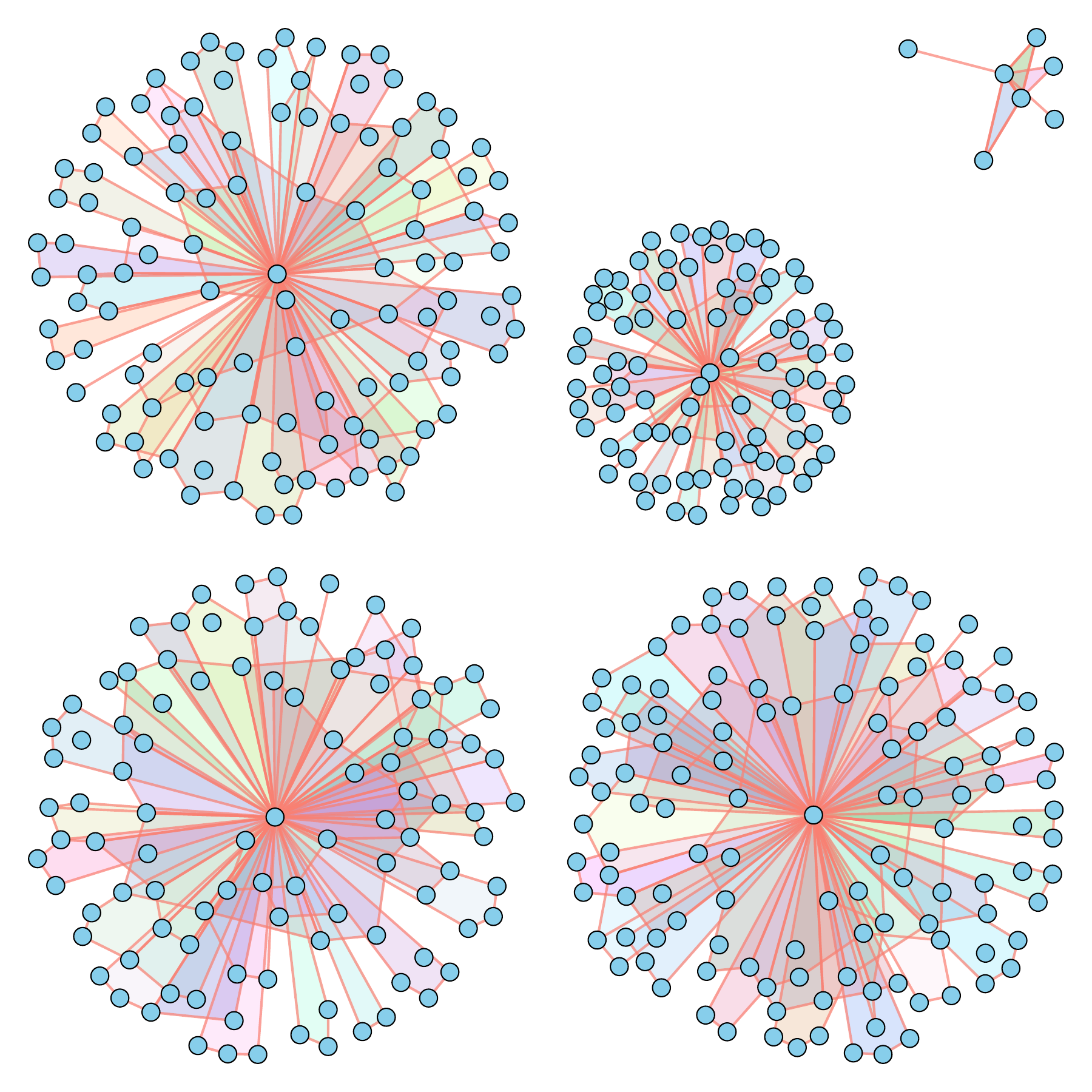}
            \caption*{Generated samples}
        \end{subfigure}
        \vspace{-0.15cm}
        \caption{Ego hypergraphs}
    \end{subfigure}
    \hfill
    \begin{subfigure}[b]{0.47\textwidth}
        \centering
        \begin{subfigure}[b]{0.47\textwidth}
            \centering
            \includegraphics[width=\textwidth]{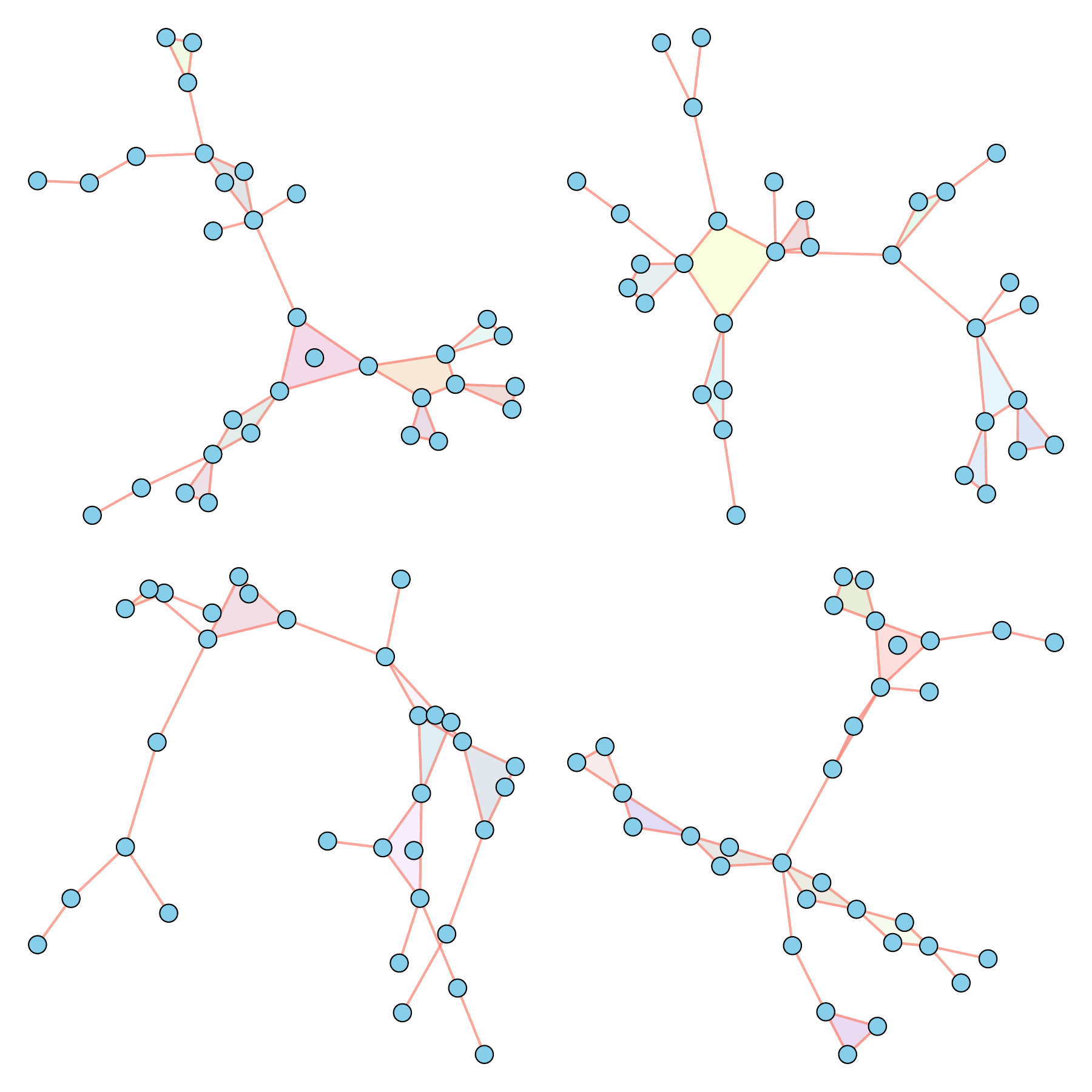}
            \caption*{Train samples}
        \end{subfigure}
        \hfill
        \begin{subfigure}[b]{0.47\textwidth}
            \centering
            \includegraphics[width=\textwidth]{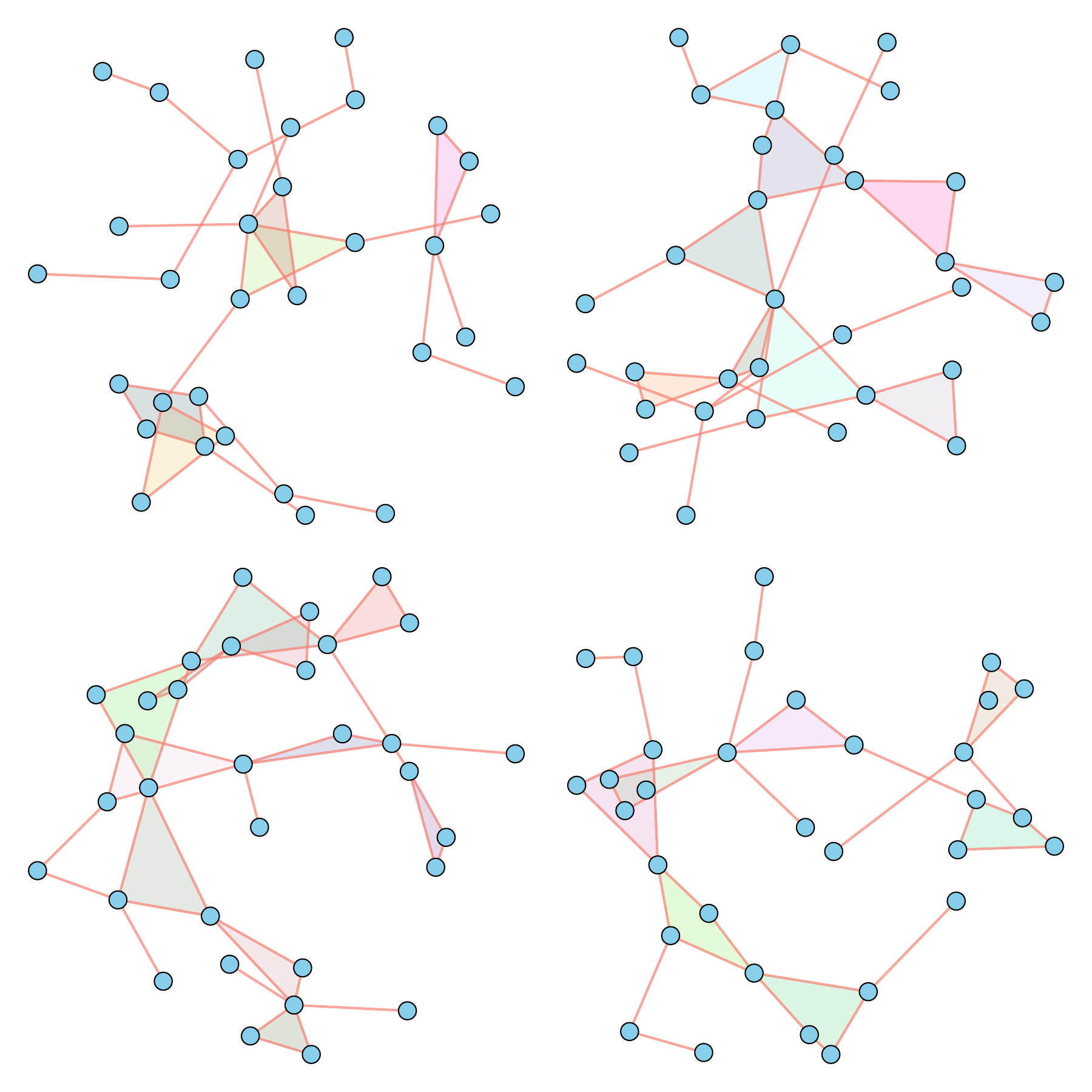}
            \caption*{Generated samples}
        \end{subfigure}
        \vspace{-0.15cm}
        \caption{Tree hypergraphs}
    \end{subfigure}

    \vspace{0.31cm}

    \begin{subfigure}[b]{0.47\textwidth}
        \centering
        \begin{subfigure}[b]{0.47\textwidth}
            \centering
            \includegraphics[width=\textwidth]{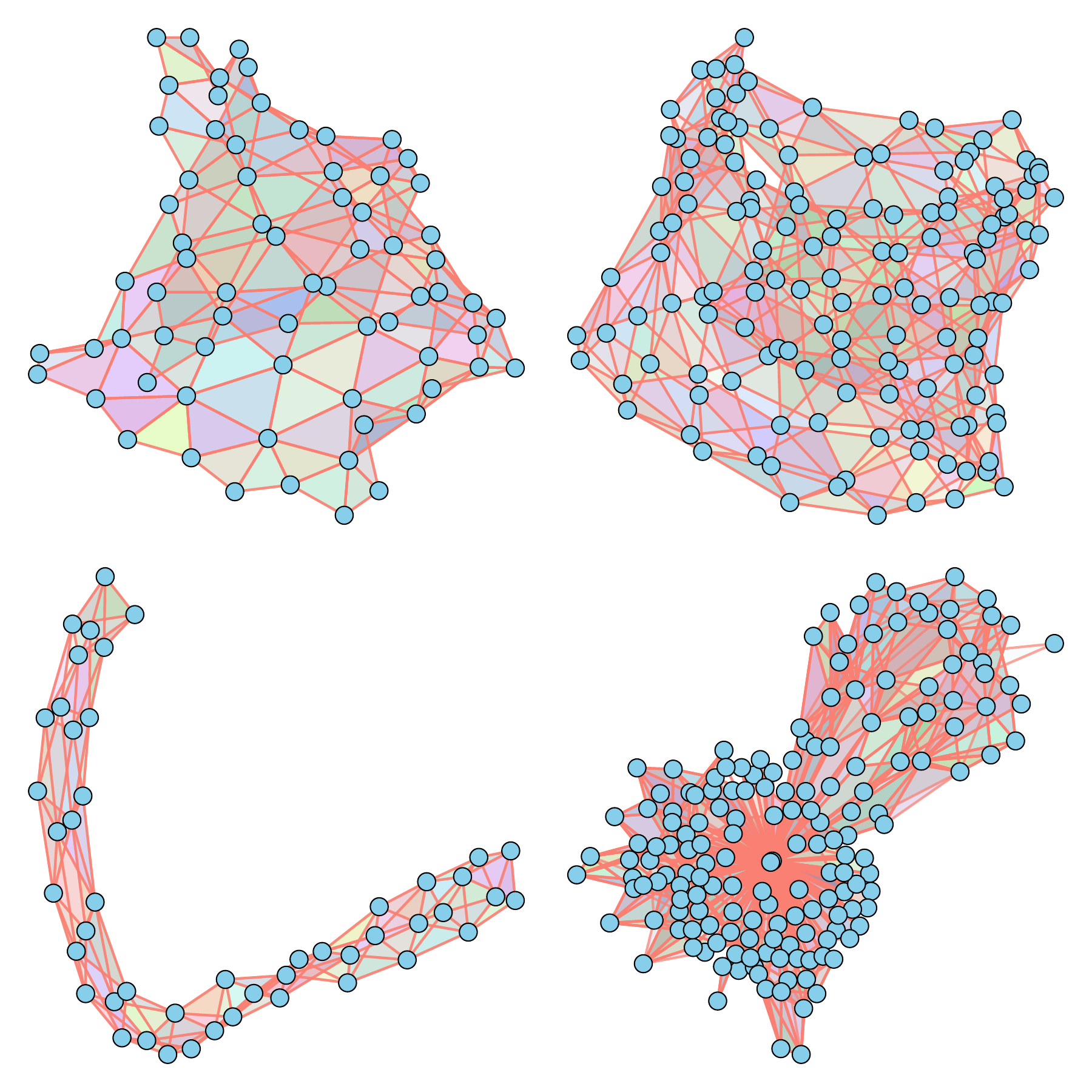}
            \caption*{Train samples}
        \end{subfigure}
        \hfill
        \begin{subfigure}[b]{0.47\textwidth}
            \centering
            \includegraphics[width=\textwidth]{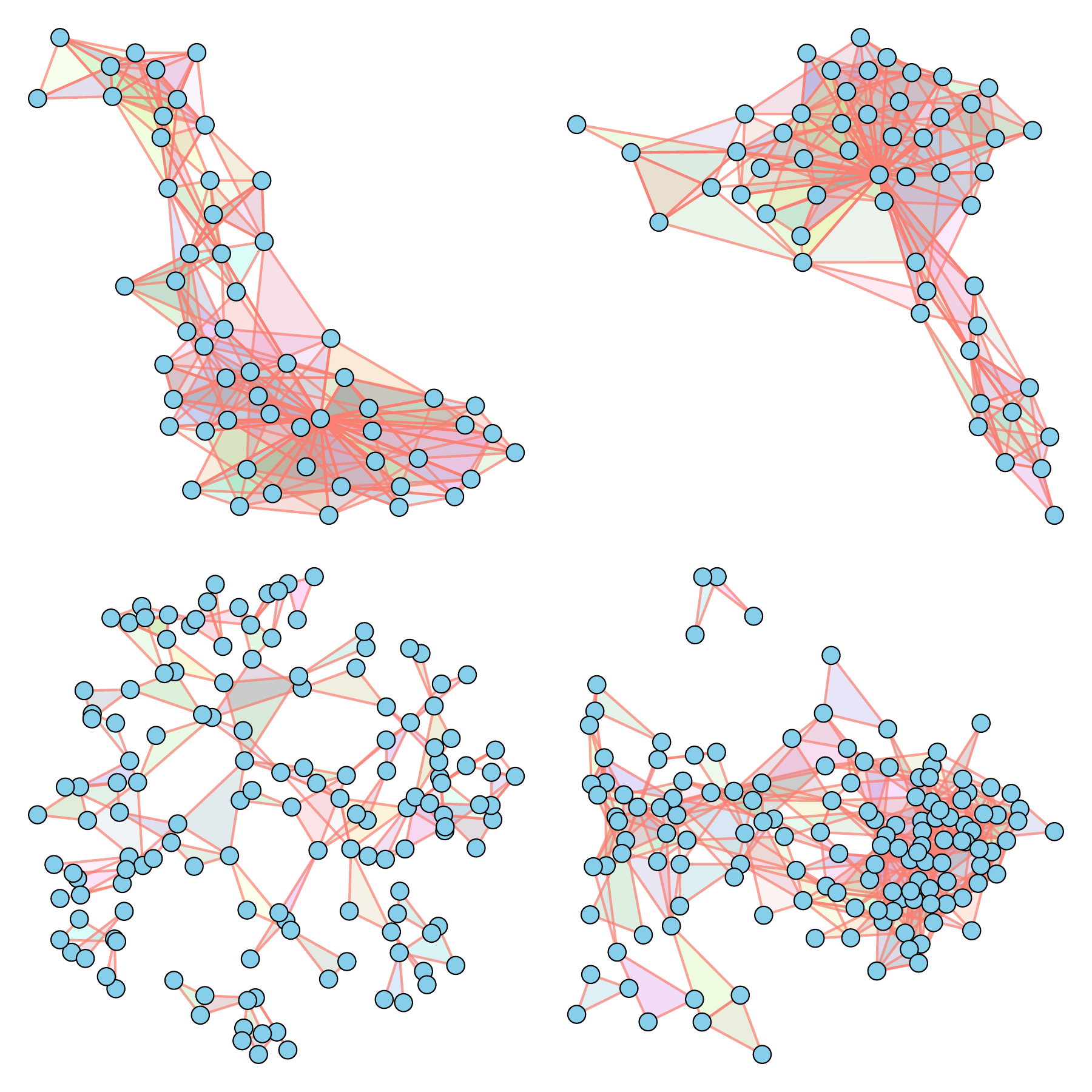}
            \caption*{Generated samples}
        \end{subfigure}
        \vspace{-0.15cm}
        \caption{Plant meshes topology}
    \end{subfigure}
    \hfill
    \begin{subfigure}[b]{0.47\textwidth}
        \centering
        \begin{subfigure}[b]{0.47\textwidth}
            \centering
            \includegraphics[width=\textwidth]{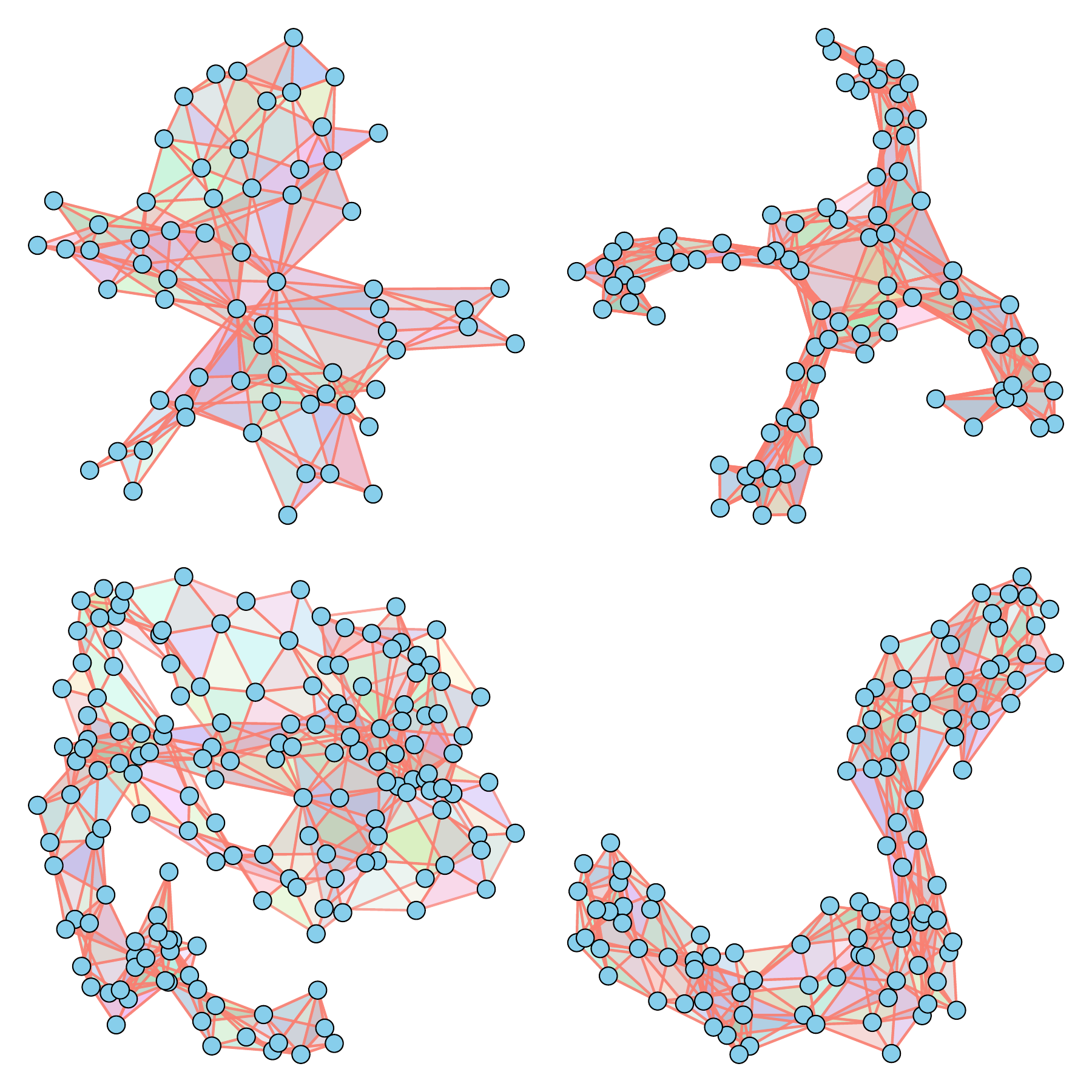}
            \caption*{Train samples}
        \end{subfigure}
        \hfill
        \begin{subfigure}[b]{0.47\textwidth}
            \centering
            \includegraphics[width=\textwidth]{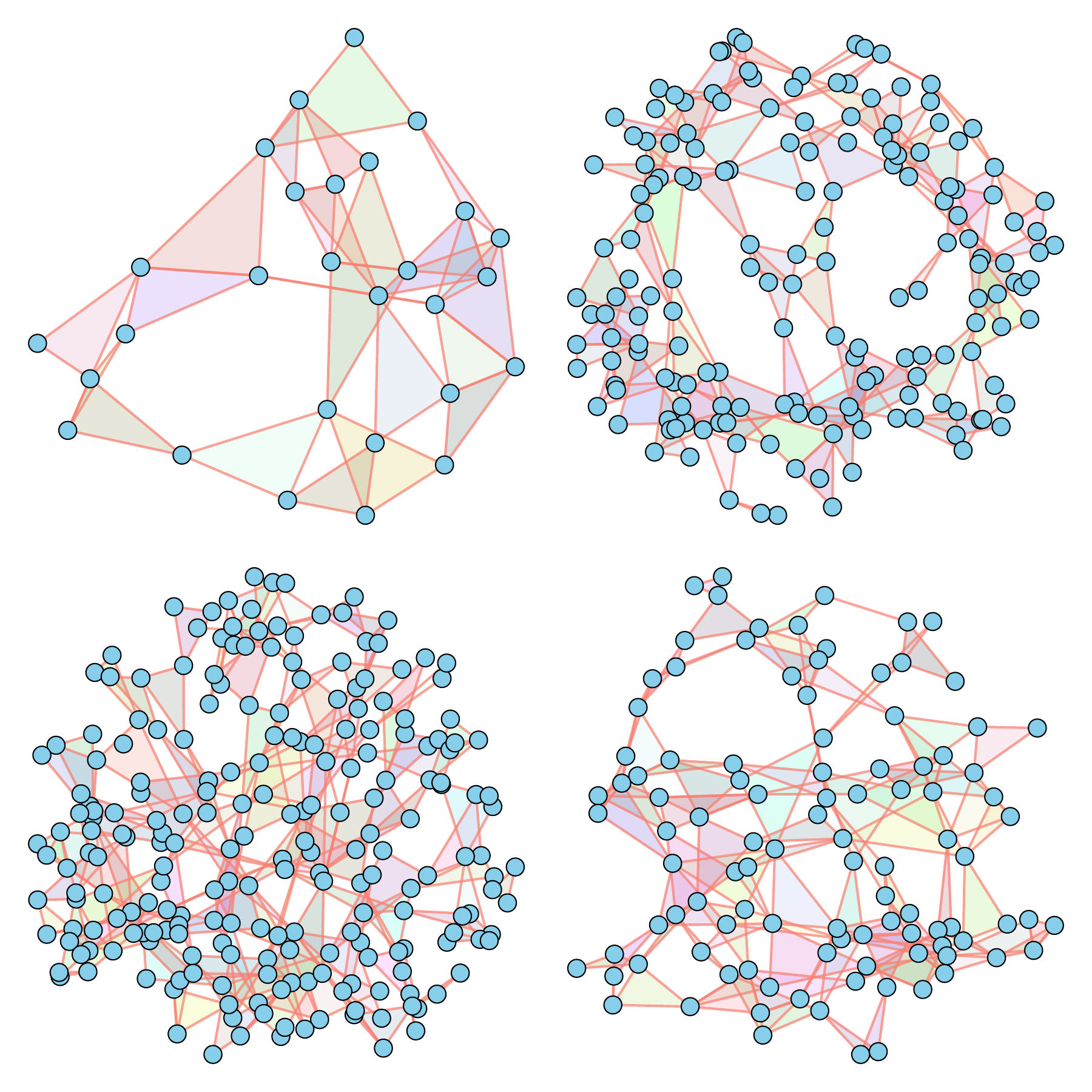}
            \caption*{Generated samples}
        \end{subfigure}
        \vspace{-0.15cm}
        \caption{Bookshelf meshes topology}
    \end{subfigure}

    \vspace{0.31cm}

    \begin{subfigure}[b]{0.47\textwidth}
        \centering
        \begin{subfigure}[b]{0.47\textwidth}
            \centering
            \includegraphics[width=\textwidth]{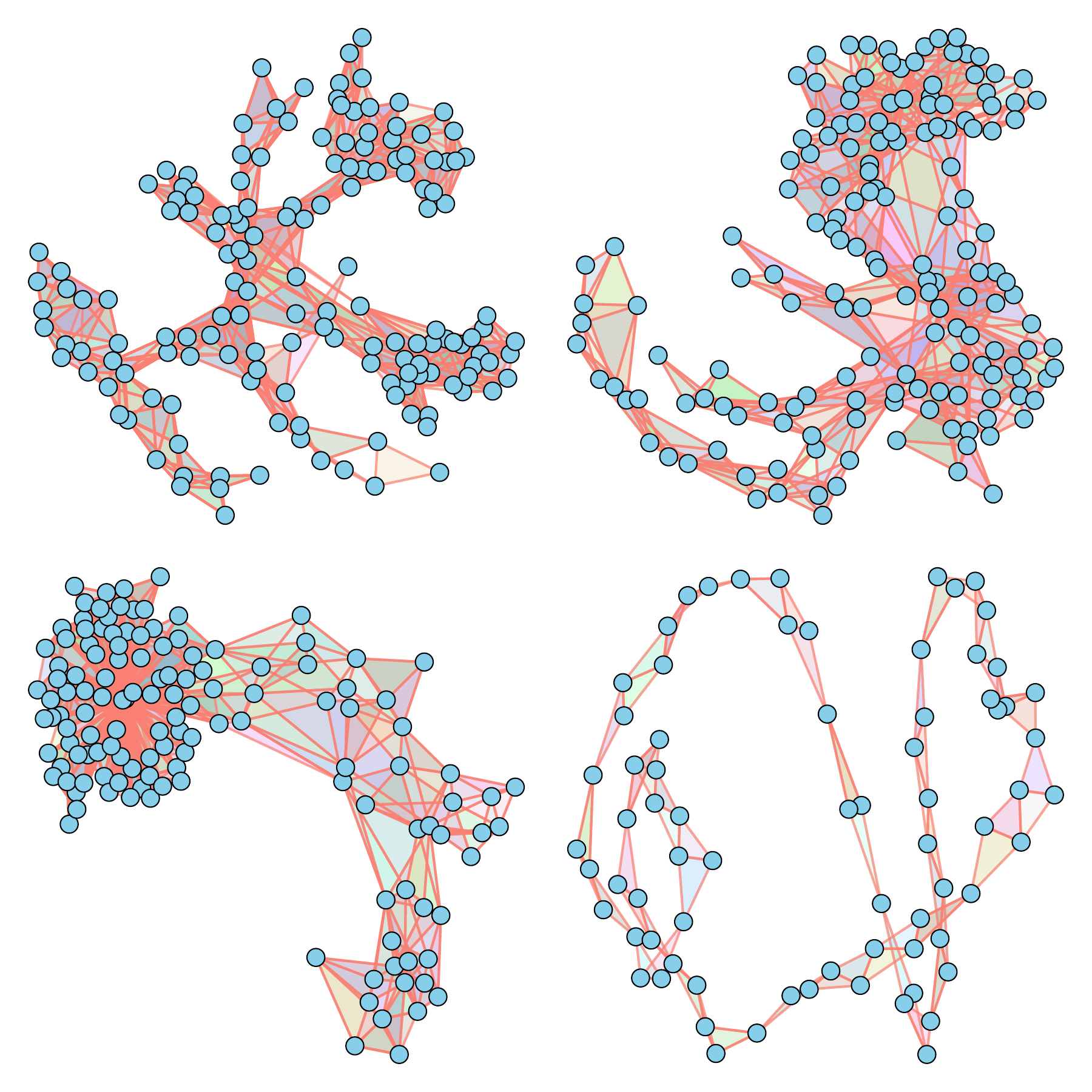}
            \caption*{Train samples}
        \end{subfigure}
        \hfill
        \begin{subfigure}[b]{0.47\textwidth}
            \centering
            \includegraphics[width=\textwidth]{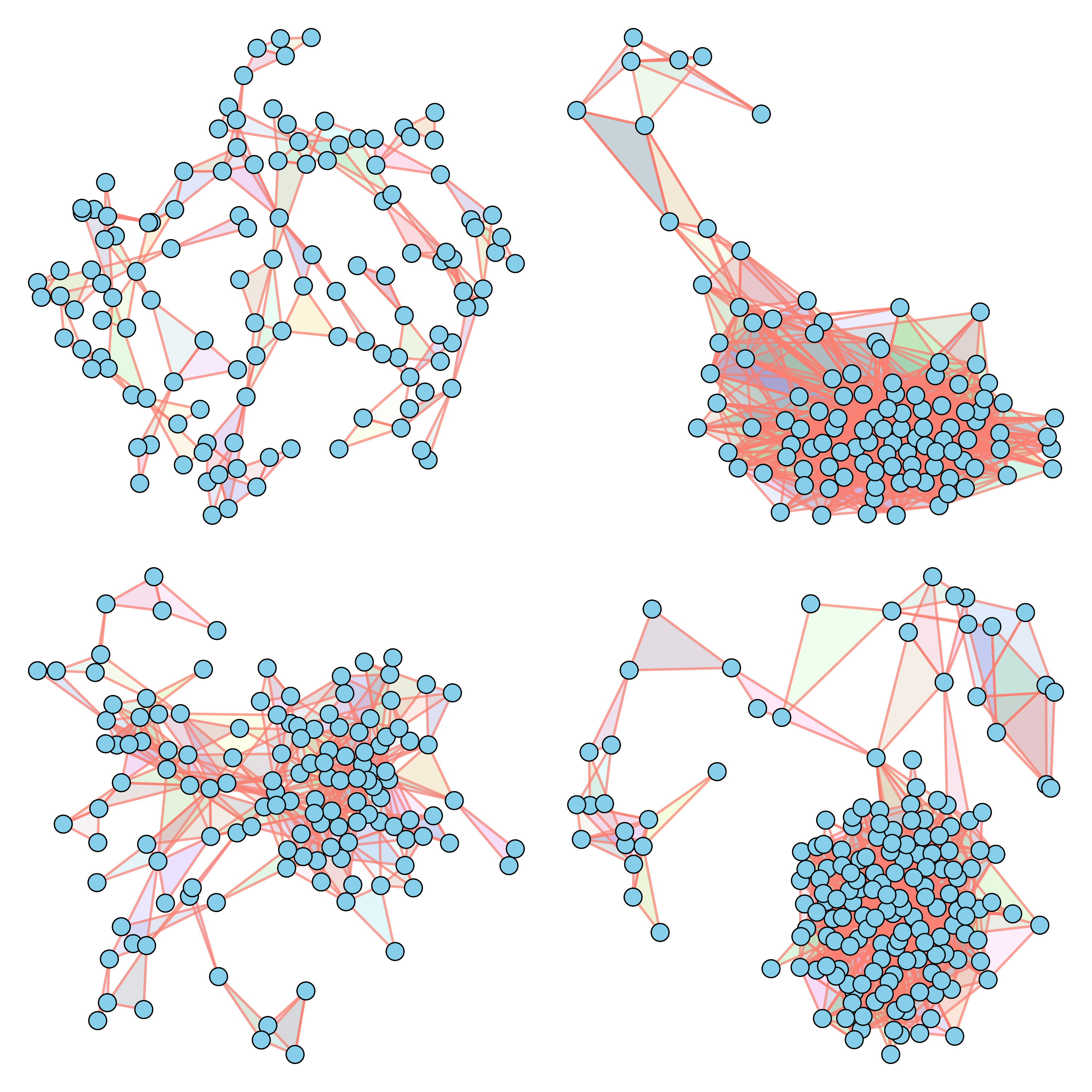}
            \caption*{Generated samples}
        \end{subfigure}
        \vspace{-0.15cm}
        \caption{Piano meshes topology}
    \end{subfigure}

    \caption{Comparison of train and generated samples for various datasets}
    \vspace{-20em}
\end{figure}

\end{document}